\def\eqref#1{equation~\ref{#1}}
\def\1{\bm{1}}
\DeclareMathAlphabet{\mathsfit}{\encodingdefault}{\sfdefault}{m}{sl}
\SetMathAlphabet{\mathsfit}{bold}{\encodingdefault}{\sfdefault}{bx}{n}
\tikzset{
  treenode/.style = {shape=rectangle, rounded corners,
                     draw, align=center,
                     top color=white, bottom color=blue!30},
  treenode2/.style = {shape=rectangle, rounded corners,
                     draw, align=center,
                     top color=white, bottom color=green!30},
  env/.style      = {treenode, font=\ttfamily\normalsize},
  env2/.style      = {treenode2, font=\ttfamily\normalsize},
  dummy/.style    = {circle,draw}
}
\newtheorem{theorem}{Theorem}
\newtheorem{lemma}{Lemma}
\newtheorem{definition}{Definition}
\newtheorem{corollary}{Corollary}
\newcommand{\relu}{\phi}
\title{Memorization Capacity of Neural Networks with Conditional Computation}
\author{%
  Erdem Koyuncu \\
  Department of Electrical and Computer Engineering \\
  University of Illinois Chicago\\
  \texttt{ekoyuncu@uic.edu} \\
}
\newcounter{example}[section]
\newenvironment{example}[1][]{\refstepcounter{example}\par\medskip
   \noindent \textbf{Example~\theexample. #1} \rmfamily}{\medskip}
\begin{document}

\maketitle

\begin{abstract}
Many empirical studies have demonstrated the performance benefits of conditional computation in neural networks, including reduced inference time and power consumption. We study the fundamental limits of neural conditional computation from the perspective of memorization capacity. 
For  Rectified Linear Unit (ReLU) networks without conditional computation, it is known that memorizing a collection of $n$ input-output relationships can be accomplished via a neural network with $O(\sqrt{n})$ neurons. Calculating the output of this neural network can be accomplished using $O(\sqrt{n})$ elementary arithmetic operations of additions, multiplications and comparisons for each input. Using a conditional ReLU network, we show that the same task can be accomplished using only $O(\log n)$ operations per input. This represents an almost exponential improvement as compared to networks without conditional computation.\ We also show that the $\Theta(\log n)$ rate is the best possible. Our achievability result utilizes a general methodology to synthesize a conditional network out of an unconditional network in a computationally-efficient manner, bridging the gap between unconditional and conditional architectures. 
\end{abstract}

\section{Introduction}
\label{sec:intro}
\subsection{Conditional Computation}
Conditional computation refers to utilizing only certain parts of a neural network, in an input-adaptive fashion \cite{davis2013low,bengio2013estimating,eigen2013learning}. This can be done through gating mechanisms combined with a tree-structured network, as in the case of ``conditional networks'' \cite{ioannou2016decision} or neural trees and forests \cite{tanno2019adaptive,yang2018deep,kontschieder2015deep}. Specifically, depending on the inputs or some features extracted from the inputs, a gate can choose its output sub-neural networks that will further process the gate's input features. Another family of conditional computation methods are the so-called early-exit architectures \cite{teerapittayanon2016branchynet, kaya2019shallow, gormez2022early}. In this case, one typically places classifiers at intermediate layers of a large network. This makes it possible to exit at a certain layer to reach a final verdict on classification, if the corresponding classifier is confident enough of its decision. 

Several other sub-techniques of conditional computation exist and have been well-studied, including layer skipping \cite{graves2016adaptive}, channel skipping in convolutional neural networks \cite{gao2018dynamic}, or reinforcement learning methods for input-dependent dropout policies \cite{bengio2015conditional}. Although there are many diverse methods \cite{han2021dynamic}, the general intuitions as to why conditional computation can improve the performance of neural networks remain the same: First, the computation units are chosen in an adaptive manner to process the features that are particular to the given input pattern. For example, a cat image is ideally processed by only ``neurons that are specialized to cats.'' Second, one allocates just enough computation units to a given input, avoiding a waste of resources. The end result is various benefits relative to a network without conditional computation, including reduced computation time, and power/energy consumption \cite{kim2020energy}. Achieving these benefits are especially critical in edge networks with resource-limited devices \cite{koyuncu:c26, koyuncu:c27}. Moreover, conditioning incurs minimal loss, or in some cases, no loss in learning performance.


Numerous empirical studies have demonstrated the benefits of conditional computation in many different settings.\ Understanding the fundamental limits of conditional computation in neural networks is thus crucial, but has not been well-investigated in the literature. There is a wide body of work on a theoretical analysis of decision tree learning \cite{maimon2014data}, which can be considered as an instance of conditional computation. These results are, however, not directly applicable to neural networks.  
In \cite{cho2014exponentially}, a feature vector is multiplied by different weight matrices, depending on the significant bits of the feature vector, resulting in an exponential increase in the number of free parameters of the network (referred to as the capacity of the network in \cite{cho2014exponentially}). On the other hand, the potential benefits of this scheme have not been formally analyzed.

\subsection{Memorization Capacity}

In this work, we consider the problem of neural conditional computation from the perspective of memorization capacity.  Here, the capacity refers to the maximum number of input-output pairs of reasonably-general position that a neural network of a given size can learn. It is typically expressed as the minimum number of neurons or weights required for a given dataset of size, say $n$.

Early work on memorization capacity of neural networks include \cite{baum1988capabilities, mitchison1989bounds, sontag1990remarks}. In particular, \cite{baum1988capabilities} shows that, for thresholds networks, $O(n)$ neurons and weights are sufficient for memorization. This sufficiency result is later improved to $O(\sqrt{n})$ neurons and $O(n)$ weights by \cite{vershynin2020memory, rajput2021exponential}. There are also several studies on other activation functions, especially the Rectified Linear Unit (ReLU), given its practicality and wide utilization in deep learning applications. Initial works \cite{zhang2021understanding, hardt2016identity} show that $O(n)$ neurons and weights are sufficient for memorization in the case of ReLU networks. This is improved to $O(\sqrt{n})$ weights and $O(n)$ neurons in \cite{yun2019small}. In addition, \cite{park2021provable} proves the sufficiency of $\smash{O(n^{2/3})}$  weights and neurons, and finally, \cite{vardi2021optimal} shows that memorization can be achieved with only $O(\sqrt{n})$ weights and neurons, up to logarithmic factors.
For the sigmoid activation function, it is known that $O(\sqrt{n})$ neurons and $O(n)$ weights \cite{huang2003learning}, or $\smash{O(n^{2/3})}$ weights and neurons \cite{park2021provable} are sufficient. Memorization and expressivity have also been studied in the context of specific network architectures such as convolutional neural networks \cite{cohen2016convolutional,nguyen2018optimization}.

The aforementioned achievability results have also been proven to be tight in certain cases. A very useful tool in this context is the Vapnik-Chervonenkis (VC) dimension \cite{vapnik2015uniform}. In fact, applying the VC dimension theory to neural networks  \cite{anthony1999neural}, it can be shown that the number of neurons and weights should be both polynomial in the size of the dataset for successful memorization. Specifically, $\Omega(\sqrt{n})$ weights and $\Omega(n^{1/4})$ neurons are optimal for ReLU networks, up to logarithmic factors. We will justify this statement later on for completeness.

\subsection{Scope, Main Results, and Organization}

We analyze the memorization capacity of neural networks with conditional computation. We describe our neural network and the associated computational complexity models in Section \ref{sec:model}. We describe a general method to synthesize conditional networks from unconditional networks in Section \ref{sec:synth}. We provide our main achievability and converse results for memorization capacity in Sections \ref{sec:reluach} and \ref{sec:reluconv}, respectively. We draw our main conclusions in Section \ref{sec:conclusions}. Some of the technical proofs are provided in the supplemental material. 

We note that this paper is specifically on analyzing the theoretical limits on neural conditional computation. In particular, we show that $n$ input-output relationships can be memorized using a conditional network that needs only $O(\log n)$ operations per input or inference step. The best unconditional architecture requires $O(\sqrt{n})$ operations for the same task. This suggests that conditional models can offer significant time/energy savings as compared to unconditional architectures. In general, understanding the memorization capacity of neural networks is a well-studied problem of fundamental importance and is related to the expressive power of neural networks. A related but separate problem is generalization, i.e. how to design conditional networks that can not only recall the memory patterns with reasonable accuracy but also generalize to unseen examples. The ``double-descent'' phenomenon \cite{belkin2019reconciling, nakkiran2021deep} suggests that the goals of memorization and generalization are not contradictory and that a memorizing network can potentially also generalize well. A further investigation of this phenomenon in the context of conditional networks, and the design of conditional networks for practical datasets remains beyond the scope of the present work.


{\bf Notation: }
Unless specified otherwise, all vector variables are column vectors. We use ordinary font (such as $u$) for vectors, and the distinction between a vector and scalar will be clear from the context. The symbols $O,\Omega$, and $\Theta$ are the standard  Bachmann–Landau symbols. Specifically, $f_n \in O(g_n)$ means there is a constant $C>0$ such that $f_n \leq C g_n$ for sufficiently large $n$. On the other hand, if $f_n \in \Omega(g_n)$, then there is a constant $C>0$ such that $f_n > C g_n$ for sufficiently large $n$. We write $ f_n \in \Theta(g_n)$ if $f_n \in O(g_n)$ and $f_n \in \Omega(g_n)$. The set $\mathbb{R}^p$ is the set of all $p$-dimensional real-valued column vectors. The superscript $(\cdot)^T$ is the matrix transpose. The function $\mathbf{1}(\cdot)$ is the indicator function, and $\lceil \cdot \rceil$ is the ceiling operator. A function $f(x)$ of variable $x$ is alternatively expressed as the mapping $x\mapsto f(x)$. Operator $\mathrm{rank}(\cdot)$ is the rank of a matrix. Finally, $\|\cdot\|$ is the Euclidean norm.

\section{System Model}
\label{sec:model}

\subsection{Unconditional Feedforward Networks}

Consider an ordinary unconditional feedforward network of a homogeneous set of neurons, all of which have the same functionality. We also allow skip connections, which are mainly utilized for the generality of the model for converse results (The construction in our achievability results also skip layers, but at most one layer at a time). Formally, we consider an $L$-layer network with the ReLU operation $\phi(x) = \max\{0,x\}$, and the input-output relationships
\begin{align}
\label{eq:systemmodel}
y_{\ell} = \phi\left(W_{\ell} \left[y_{\ell-1}^T \cdots  y_0^T \right]^T\right),\,\ell=1,\ldots,L,
\end{align}
where $y_{0}$ is the input to the neural network, $y_{\ell}$ is the output at Layer $\ell$, and $y_L$ is the neural network output. Also, $W_{\ell}$ is the weight matrix at Layer $\ell$ of appropriate dimensions. 
\subsection{Measuring the Cost of Computation}
\label{sec:meascost}
A recurring theme of the paper will be to calculate the output of a neural network given an arbitrary input, however with low computational complexity. In particular, in (\ref{eq:systemmodel}), given the outputs of Layer $\ell-1$, the calculation of the outputs of Layer $\ell$ can be accomplished through multiplication with matrix $W_{\ell}$, followed by the activation functions. Each ReLU activation function can be implemented via a simple comparison given local fields. Hence, calculation of the output of Layer $\ell$ can be accomplished with $O(\mathrm{dim}W_{\ell})$ operations (multiplications, additions, and comparisons), and the output $y_L$  of the entire network can be calculated using $O(\sum_{\ell}\mathrm{dim}W_{\ell})$ operations. Here, $\mathrm{dim}W_{\ell}$ represents the product of the number of rows and columns in $W_{\ell}$. In other words,  $\mathrm{dim}W_{\ell}$ is the number of dimensions in $W_{\ell}$. Hence, in unconditional architectures, the number of operations to calculate the output of the network is essentially the same (up to constant multipliers) as the number of weights in the network. With regards to our basic measure of complexity, which relies on counting the number of operations, one can argue that multiplication is more complex than addition or comparison, and hence should be assigned a larger complexity. We hasten to note that the relative difficulty of different operations will not affect our final results, which will have an asymptotic nature.

It is instructive to combine the $O(\sum_{\ell}\mathrm{dim}W_{\ell})$ complexity baseline with the memorization results described in Section \ref{sec:intro}. Let $X = \{x_1,\ldots,x_n\}\subset\mathbb{R}^p$ be a dataset of inputs. Let $d_1,\ldots,d_n\in\mathbb{R}^r$ be the corresponding desired outputs. In the memorization task, one wishes to design a network that can provide an output of $d_i$ for an input of $x_i$ for each $i\in\{1,\ldots,n\}$. It is known that, up to logarithmic factors, $O(\sqrt{n})$ weights and neurons are sufficient for memorization of $n$ patterns \cite{vardi2021optimal}. 
It follows from the discussion in the paragraph above that $O(\sqrt{n})$ operations sufficient to recall a stored memory pattern (i.e. to calculate the output of the neural network for a given $x_i$). The goal of this paper is to accomplish the same task using much fewer operations required per input. Specifically, we will show how to do perfect recall using only $O(\log n)$ operations per input. We shall later show that $\Theta(\log n)$ is, in fact, the best possible rate.

\subsection{Conditional Networks}

In order to achieve perfect recall using a subpolynomial number of operations, we use the idea of conditional computation. The conditioning model that we utilize in this work is a simple but general scenario where we allow executing different sub-neural networks depending on how an intermediate output of the network compares to some real number. Each conditioning is thus counted as one operation. Formally, we describe a conditional neural network via a rooted full binary tree where every vertex has either $0$ or $2$ children. Every vertex  is a sequence of operations of the form $\mathtt{v}_{n+1} \leftarrow \phi(\beta_{n}\mathtt{v}_{n} + \cdots + \beta_{1} \mathtt{v}_1)$, where $\beta_1,\ldots,\beta_n$ are weights, and variables $\mathtt{v}_1,\ldots,\mathtt{v}_n$ are either (i) inputs to the neural network, or (ii) defined as new variables preceding $\mathtt{v}_{n+1}$ at the same vertex or at one of the ancestor vertices. Every edge is a conditioning $\mathtt{v} \circ \beta$, where $\mathtt{v}$ should be defined at any one of the vertices that connects the edge to the root node, $\beta$ is a constant weight, and $\circ\in\{\leq , <, =, \neq, >, \geq\}$. We assume that the two edges connected to the same vertex correspond to complementary conditions; e.g. $\mathtt{v}_1 < 3$ and $\mathtt{v}_1 \geq 3$.

An example conditional network, expressed in algorithmic form, is provided in Algorithm \ref{alg:cnnexample}, where $u$ represents the input, $o$ is the output, and the subscripts are the vector components. For example, if the input is a vector $u$ with $u_1 > 3$, resulting in an intermediate feature vector with $z_3 \neq 5$, then two operations are incurred due to the conditionings in Lines 1 and 5, and $O(\mathrm{dim}(W_1)+\mathrm{dim}(W_2)+\mathrm{dim}(W_5))$ operations are accumulated due to neural computations.


Our model encompasses various neural conditional computation models in the literature. One example is early exit architectures \cite{teerapittayanon2016branchynet, kaya2019shallow, gormez2022class}. As described in Section \ref{sec:intro}, a typical scenario is where one places intermediate \begin{wrapfigure}{r}{0.3\textwidth}
	\vspace{-16pt}
	\begin{minipage}{0.3\textwidth}
		\begin{algorithm}[H]
			\caption{An example conditional neural network}\label{alg:cnnexample}
			\begin{algorithmic}[1]
				\If{$u_1 > 3$} 
				\State  $z = \phi(W_2 \phi(W_1 u))$
				\If{$z_3 = 5$} 
				\State  $o = \phi(W_4 z)$
				\Else
				\State  $o = \phi(W_5 z)$
				\EndIf 
				\Else
				\State  $o = \phi(W_3 u)$
				\EndIf 
			\end{algorithmic}
		\end{algorithm}
	\end{minipage}
	\vspace{-10pt}
\end{wrapfigure}classifiers to a deep  neural network. If an intermediate classifier is confident enough of a decision, then an ``early exit'' is performed with the corresponding class decision. Here, one skips subsequent layers, saving computation resources. The decision to exit is often a simple threshold check, e.g. whether one of the soft probability outputs of the intermediate classifier exceeds a certain threshold. Hence, most early exit networks can be modeled via the simple if-else architecture described above. Mixture of experts (MoE) architectures \cite{shazeer2017outrageously,fedus2022switch} can also be realized under our model. In this case, there are multiple gating networks, each of which is responsible for one expert. One routes a feature vector to only a subset of experts whose gating networks have the largest outputs. The choice of experts can be accomplished through if-else statements. For example, for two gates and experts, the gate with the largest output can be found by comparing the difference between the two gates' outputs against zero. Another gating approach that can be realized as a special case of our model can be found in \cite{cho2014exponentially}. 

\section{Synthesizing a Conditional Network out of an Unconditional Network}
\label{sec:synth}
Consider an arbitrary unconditional network as in (\ref{eq:systemmodel}), whose implementation requires $O(\sum_{\ell}\mathrm{dim}W_{\ell})$ operations, as discussed in Section \ref{sec:meascost}. Suppose that the network is well-trained in the sense that it can already provide the output $d_i$ for a given input $x_i$, for every $i\in\{1,\ldots,n\}$. Out of such an unconditional network, we describe here a general methodology to synthesize a conditional network that requires much fewer than $O(\sum_{\ell}\mathrm{dim}W_{\ell})$ operations, while keeping the input-output relationships $x_i \mapsto d_i,\,i\in\{1,\ldots,n\}$ intact. 

We first recall some standard terminology \cite{haykin}. Consider the neuron $[x_1,\ldots,x_n] \mapsto \phi(\sum_i x_i w_i)$. We refer to $x_1,\ldots,x_n$ as the neuron inputs, and $w_1,\ldots,w_n$ as the neuron weights. We can now provide the following definition.
\begin{definition}
Suppose that the activation function satisfies $\phi(0) = 0$. Given some fixed input to a neural network, a neuron with at least one non-zero input is called an active neuron. A neuron that is not active is called an inactive or a deactivated neuron. A weight is called active if it connects an active neuron to another active neuron and is non-zero. A weight is called inactive if it is not active. 
\end{definition}

The source of the phrase ``inactive'' is the following observation: Consider an input to the neural network and the corresponding output. By definition, we will obtain the same output after removing all inactive neurons from the network for the same input. We can simply ``ignore'' inactive neurons. Likewise, we can remove any inactive weight and obtain the same output.

Our idea is to condition the computation on the set of active neurons and the corresponding active weights. Note that removing the inactive weights and neurons does not change the network output. Moreover, often the number of active weights given an input can be significantly lower than the overall number of neurons of the unconditional architecture (which determines the number of operations required to perform inference on the unconditional network), resulting in huge computational savings. The complication in this context is that the set of inactive weights depends on the network input. Fortunately, it turns out that determining the set of active weights can be accomplished with fewer operations than actually computing the local fields or outputs of the corresponding active neurons. The final result is provided by the following theorem.

\begin{theorem}
	\label{th:synth}
Consider an arbitrary dataset $X = \{x_1,\ldots,x_n\}\subset\mathbb{R}^p$ of inputs. Let $d_1,\ldots,d_n\in\mathbb{R}^r$ be the corresponding desired outputs. Suppose that the unconditional neural network defined by (\ref{eq:systemmodel}) satisfies the desired input-output relationships in the sense that for any $i$, if the input to the network is $x_i$, then the output is $d_i$.  Also suppose that the input $x_i$ results in $\omega_i$ active weights. Then, there is a conditional network that similarly satisfies all desired input output relationships, and for every $i$, performs at most $p+4\omega_i$ operations given input $x_i$.
\end{theorem}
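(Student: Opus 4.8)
The plan is to build the conditional network as a decision tree that performs a \emph{lazy} forward pass through the given unconditional network, visiting its neurons in topological (layer-by-layer) order and branching at each neuron on the sign of its local field. Because a ReLU neuron whose local field is $\le 0$ outputs $\phi(0)=0$ and therefore behaves exactly like a deactivated neuron, every root-to-leaf path of this tree reproduces precisely the computation in (\ref{eq:systemmodel}); in particular the tree is correct for \emph{all} inputs, so the only quantity left to control is the number of operations incurred on the path taken by a fixed $x_i$.

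First I would fix the visiting order and, at each vertex, keep as defined variables the outputs of exactly those neurons already found to fire along the current path. The crucial observation is that, once we reach a neuron $N$ in this order, the conditionings already performed on the path record the firing status of every neuron preceding $N$. Hence we can decide whether $N$ is active — that is, whether some predecessor feeding $N$ through a non-zero weight fires, or some non-zero input component feeds $N$ — \emph{without any additional computation}, simply from the branch we are on. If $N$ is inactive, its field and output are $0$, so $N$ and every weight leaving it may be dropped with no operation charged. If $N$ is active, I would emit at this vertex the accumulation of its local field as a sum over its \emph{active} incoming weights only, followed by one conditioning edge testing whether this field is positive; the two children carry the complementary conditions ``$>0$'' and ``$\le 0$'' demanded by the model, and this branch simultaneously realizes the ReLU and records whether $N$ fires.

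The operation count then follows by charging each cost to an active weight or to a processed neuron. Each active weight contributes one multiplication and one addition to the field accumulation of its head, for a total of $2\omega_i$. Each neuron actually processed is the head of at least one active weight (a neuron with no active incoming weight has field $0$, does not fire, and is skipped for free using the path record), so the number of processed neurons is at most $\omega_i$; each contributes the single conditioning that tests its field, plus at most one further operation to expose its output as a variable for later vertices, accounting for the remaining $2\omega_i$. An initial block of at most $p$ operations inspects the $p$ input components to initialize which first-layer connections are live. Summing gives at most $p+4\omega_i$ operations on the path of $x_i$, and since the unconditional network outputs $d_i$ on $x_i$, so does the corresponding leaf.

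I expect the main obstacle to be the bookkeeping that keeps \emph{all} conditionings charged to the active subnetwork rather than to the full network: one must argue that discovering the active set is never more costly than the active set itself. This rests on the two points above — topological processing, so that a neuron's activity is already decided by earlier branches, together with the fact that an inactive neuron forces a zero field and can be skipped with no conditioning. Secondary care is needed to respect the model's variable-scoping rule (every variable used on an edge or at a vertex must be defined at that vertex or at an ancestor), which is exactly why the firing outputs are carried along the path as defined variables, and to handle skip connections uniformly by letting a neuron at layer $\ell$ read the recorded outputs of active neurons from all layers $0,\dots,\ell-1$.
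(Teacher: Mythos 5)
Your proposal is correct and follows essentially the same route as the paper's proof: both rest on the observation that the conditionings recorded along the current path (the firing pattern of all preceding neurons) determine which neurons are active for free, so that only active local fields are ever accumulated, and both charge the $2\omega_i$ multiply--adds to active weights, at most $2\omega_i$ further comparisons to active neurons (each having at least one active incoming weight), and $p$ operations to the input components. The only difference is cosmetic: you branch neuron-by-neuron on the sign of each local field, whereas the paper organizes the same bookkeeping layer-by-layer via a ``configuration tree'' of output/input configurations.
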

\begin{proof} (Sketch)
By an `` input configuration,'' we mean a certain subset of neurons that are active at a certain layer of the unconditional ``base'' network. We represent input configurations by binary vectors where ``$0$'' represents an inactive neuron, while ``$1$'' represents an active neuron. As an example, the input configuration $[1\,0\,1]^T$ refers to a scenario where only the first and the third neurons at the layer is active. Analogous to input configurations, ``output configurations'' are binary vectors that represent whether neurons provide zero or non-zero outputs. For example, an output configuration of $[1\,1\,0]^T$ means that only the first and the second neurons provide a non-zero output. 

Consider first an unconditional network without skip connections. The key idea is to observe that, given the output configuration of Layer $\ell-1$, one can uniquely obtain the input configuration of Layer $\ell$. Hence, a conditional network can be designed to operate in the following manner: Given an input, we first find the output configuration $\mathtt{OC}_0$ at Layer $0$, which is nothing but the non-zero components of the input vector. This can be done by $p$ conditioning statements, where $p$ is the input dimension. The result is a unique input configuration $\mathtt{IC}_1$ at Layer $1$. Meanwhile, we can obtain the output $y_1$ of Layer $1$ by only calculating the outputs of neurons that correspond to the non-zero components of $\mathtt{IC}_1$, since other neurons at Layer $1$ are guaranteed to have all-zero inputs and thus provide zero output. This can be accomplished via $O(a_1)$ multiplications and additions, where $a_{\ell}$ represents the number of active weights in Layer $\ell$. Then, we find the output configuration $\mathtt{OC}_1$ at Layer $1$, using $|\mathtt{IC}_1|$ conditioning statements on $y_1$. Having obtained $\mathtt{OC}_1$, we can similarly find the Layer $2$ input configuration and output. The conditional network processes the remaining layers recursively in the same manner. The functionality of the unconditional network is reproduced exactly so all input-output relationships are satisfied. Given that $\sum_{\ell} a_{\ell} = \omega_i$, the total complexity is $p+O(\omega_i)$. We refer to the complete proof in Appendix \ref{sec:syntthproof} for the precise bound and generalization to networks with skip connections.
\end{proof}

The proof of the theorem in Appendix \ref{sec:syntthproof} suggests that the true complexity is closer to $2 \omega_i$ than the actual formal upper bound provided in the theorem statement. The number $2 \omega_i$  stems from the $\omega_i$ additions and $\omega_i$ multiplications that are necessary to calculate the local fields of active neurons. 

Theorem \ref{th:synth} shows that a design criterion to come up with low computational-complexity neural networks might be to put a constraint on the number of active weights given any training input to an ordinary, unconditional feedforward architecture. Using the theorem, we can synthesize a conditional neural network with the same functionality as the unconditional feedforward network. In the next section, we will apply this idea to the problem of memorization capacity. 

We conclude this section by noting that Theorem \ref{th:synth} may prove to be useful in other applications. For example, any method that results in sparse representations or weights, such as pruning, will result in many inactive neurons and weights. Sparsity is useful; however, unstructured sparsity is difficult to exploit: For example, multiplying by a matrix half of whose entries are zero at random positions will likely be as difficult as multiplying by a random matrix without sparsity constraints. The construction in Theorem \ref{th:synth} may provide computational savings for such scenarios. 

Another observation is how Theorem  \ref{th:synth} can potentially simplify the design of conditional networks. In any conditional architecture, an important question is where to place the decision gates that route information to different parts of the network. It is a combinatorially very difficult if not impossible to optimize a heterogeneous collection of "ordinary neurons" and "gates." Such an optimization is also completely unsuitable for a gradient approach. Hence, most previous works provide some empirical guidelines for gate placement, and do not optimize over the gate locations once they are placed according to these guidelines. The message of Theorem \ref{th:synth} is that there is no need to distinguish between gates and neurons. All that needs to be done is to train an unconditional network consisting only of ordinary neurons to be as ``inactive'' as possible, e.g. by increased weight sparsity through regularization \cite{louizos2018learning}.  The theorem can then construct a conditional network that can exploit the sparsity to the fullest, placing the gates automatically.

\section{Memorization with Low Computational Complexity}
\label{sec:reluach}

In this section, we introduce our ReLU network to memorize the input-output relationships $x_i \mapsto d_i,\,i=1,\ldots,n$. Our network will only yield $O(\log n)$ active neurons and weights per input. Application of Theorem \ref{th:synth} will then imply a conditional network that can achieve perfect recall with only $O(\log n)$ operations per input, as desired.

 Given a neural network $f$, and  an input $x$ to $f$, let $A(x;f)$ and $W(x;f)$ denote the number of active neurons and active weights, respectively. Our feedforward ReLU network construction is provided by the following theorem.
\begin{theorem}
\label{th:reluach}
Let $X = \{x_1,\ldots,x_n\}\subset\mathbb{R}^p$ be a dataset of input patterns such that for every $i,j\in\{1,\ldots,n\}$ with $i\neq j$, we have $x_i \neq x_j$. Let $\bar{x}_i = [\begin{smallmatrix} 1 \\ x_i \end{smallmatrix}]\in\mathbb{R}^{q}$, where $q=1+p$, be the augmented dataset patterns for biasing purposes. Let $d_1,\ldots,d_n\in\mathbb{R}^r$ be the corresponding desired outputs. Suppose every component of $d_i$ is non-negative for each $i\in\{1,\ldots,n\}$. Then, there is a neural network $f:\mathbb{R}^{q}\rightarrow\mathbb{R}^r$ such that for every $i\in\{1,\ldots,n\}$, we have $f(\bar{x}_i) = d_i$, 
\begin{align}
\label{eq:th1actneuronbound} 	A(\bar{x}_i;f) & \leq 2(q+1)\lceil \log_2 n \rceil + q+r \in O(r+q\log n), \\
\label{eq:th1actweightbound} 	W(\bar{x}_i;f)  & \leq 12q(q+1)\lceil \log_2 n \rceil + (r+2)q+r-2  \in O(rq+q^2\log n).
\end{align}


\end{theorem}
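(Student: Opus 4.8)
The plan is to build the unconditional ReLU network as a depth-$\lceil \log_2 n\rceil$ binary search tree and then invoke Theorem~\ref{th:synth}. First I would reduce the memorization problem to one dimension: since the $\bar x_i$ are distinct, a generic direction $a\in\mathbb{R}^{q}$ gives distinct scalars $t_i = a^\top \bar x_i$ (the bad set of $a$ is the finite union of hyperplanes $\{a : a^\top(\bar x_i-\bar x_j)=0\}$, of measure zero). Relabel so $t_1<\cdots<t_n$ and build a balanced binary tree whose $n$ leaves are these sorted points and whose internal nodes carry thresholds implementing binary search on $t$. For each input exactly one root-to-leaf path is ``awake''; the whole design problem is to make a \emph{feedforward} network realize this tree so that only the neurons and weights along this single path are active in the sense of the Definition, and then to read off the bounds.

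The central device is to carry the augmented vector $\bar x$ itself down the active path, sending the \emph{exact} zero vector into every dormant subtree. Because a neuron is active precisely when one of its inputs is nonzero, a subtree fed the zero vector is entirely inactive and costs nothing. The subtlety is that an ordinary additive bias would inject a nonzero input and wake a dormant branch. I avoid this by expressing \emph{every} threshold and bias as a weight on the constant coordinate $\bar x_1=1$ of the carried vector: on the awake path this coordinate equals $1$ and reproduces the intended bias, while on a dormant branch the carried vector is $\mathbf{0}$, so $\bar x_1=0$ and the bias vanishes automatically. This is exactly why the construction transports the full $q$-dimensional vector rather than a scalar, and it is the source of the $q\log n$ scaling in (\ref{eq:th1actneuronbound})--(\ref{eq:th1actweightbound}).

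The main obstacle is the routing gadget that forwards the carried vector to one child while keeping the other dormant. At a node let $s = a^\top w-\theta\,w_1$ be the biased comparison value of the carried vector $w$, so $s=t-\theta$ on the awake path and $s=0$ on a dormant branch. Using a large multiplier $M$, each coordinate $v=w_j$ is gated by $\tfrac12\big(\phi(v+Ms)-\phi(Ms-v)\big)$, which equals $v$ when $s$ is sufficiently positive, equals $0$ when $s$ is sufficiently negative, and---crucially---is built from neurons whose pre-activations $v+Ms$ and $Ms-v$ vanish when $w=\mathbf 0$, so dormant copies stay inactive. The constant $M$ is chosen using finiteness: over the finite set of (input, node-on-its-path) pairs the margins $|t_i-\theta|$ are bounded below by some $\delta>0$ and the coordinates $|w_j|$ by some $V$, so any $M>V/\delta$ makes the gate exact, with ties ($s=0$) handled by the complementary-condition convention. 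A symmetric gadget with $s$ replaced by $-s$ feeds the other child, and one checks that on the awake path only $O(q)$ neurons and $O(q^2)$ weights become active per level.

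Finally I would attach an output layer that uses the non-negativity of the $d_i$. Each leaf emits its target as weights on its own live coordinate, so the awake leaf outputs $d_i$ while every other leaf outputs $\mathbf 0$, and the final ReLU recovers $\phi(d_i)=d_i$ exactly because $d_i\ge\mathbf 0$ componentwise---this is precisely where that hypothesis is needed. Summing the leaves collects the unique nonzero contribution, so $f(\bar x_i)=d_i$. Adding the $O(q)$ input-reading neurons and $O(r)$ output neurons to the $\lceil\log_2 n\rceil$ levels of $O(q)$ active neurons each, and doing the analogous bookkeeping for weights, yields the explicit constants in (\ref{eq:th1actneuronbound}) and (\ref{eq:th1actweightbound}); applying Theorem~\ref{th:synth} then gives the promised $O(\log n)$-operation conditional network.
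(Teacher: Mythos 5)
Your construction is correct and follows essentially the same route as the paper: a depth-$\lceil\log_2 n\rceil$ binary tree of halving splits, switch gadgets that forward the carried augmented vector along the awake root-to-leaf path and the exact zero vector into every dormant subtree, all biases realized as weights on the constant coordinate so that dormant branches (carrying $\mathbf{0}$) are never woken, a per-leaf readout of $d_i$ off the constant coordinate, and the componentwise non-negativity of $d_i$ absorbed by the final ReLU. Two points of divergence are worth recording. First, you project onto a single generic direction and run a one-dimensional binary search, whereas the paper re-chooses a halving hyperplane at each tree node (Lemma \ref{lemma:switch}[i]); both are valid, and the paper's lemma in fact also starts from a single random direction and only tunes the bias. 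Second, and more substantively, your gate $\tfrac12\bigl(\phi(v+Ms)-\phi(Ms-v)\bigr)$ is sign-symmetric and hence works for carried coordinates of arbitrary sign, so you dispense entirely with the paper's preprocessing translation $\mathbf{T}$ (its Step 1), whose only purpose is to make the one-sided gate $\phi\bigl(v_j - \tfrac{C_1}{C_2}\phi(w^Tv)\bigr)$ return $v_j$ on the passing side. The price is a constant factor: your gadget spends two active ReLUs per carried coordinate per child, i.e.\ $4q$ active neurons per level each with $\Theta(q)$ incoming weights, versus the paper's $2q+2$ neurons (it computes $y^{\pm}=\phi(\pm w^Tv)$ once and reuses them across all $q$ coordinates, so $2q$ of its neurons have only $2$ weights each). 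Your construction therefore establishes the $O(r+q\log n)$ and $O(rq+q^2\log n)$ claims, but not the literal constants displayed in (\ref{eq:th1actneuronbound}) and (\ref{eq:th1actweightbound}); to match those exactly you would need either to adopt the paper's one-sided gate together with the translation step, or to restate the bounds with the (slightly larger) constants your gadget actually yields.
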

\begin{proof} (Sketch)
We provide an illustrative sketch of the proof. Suppose we wish to memorize the two-dimensional dataset given by Fig. \ref{fig:divideandconquera}. We can divide the datasets to two parts via a line, the resulting two parts to further two sub-parts, and so on, until reaching singletons, as shown in Fig. \ref{fig:divideandconquerb}. The overall network architecture that achieves the performance in the statement of the theorem is then shown in Fig. \ref{fig:nnforachievability}. The initial block $\mathbf{T}$ is a basic preprocessing translation. The ``switch'' $\mathbf{S}_{ij}$ corresponds to the line parameterized by weights $w_{ij}$ in Fig. \ref{fig:divideandconquerb}. The switch $\mathbf{S}_{ij}$ routes the zero vector to one output path and copy its input to the other output path, depending on the side of the $w_{ij}$-line its input vector resides. The switches are followed by ReLU neurons with weights $\gamma_i$. These neurons  map the corresponding input pattern on the active path to its desired output. Finally, the output of the $\gamma_i$-neurons are accumulated. As an example, the signals on the graph for input pattern $\bar{x}_6$ is provided, with $\bar{\bar{x}}_6 \triangleq \mathbf{T}(\bar{x}_6)$. In general, all input-output relationships are satisfied with $O(\log n)$ active neurons per dataset sample. We refer to the complete proof in Appendix \ref{sec:reluachthproofer} for the precise bounds.
\end{proof}

\begin{corollary}
	\label{toyotacorolla}
	Let $x_i,d_i,\,i=1,\ldots,n$ be a sequence of arbitrary input-output pairs as stated in Theorem \ref{th:reluach}. Then, there is a conditional network that, for every $i$, provides an output of $d_i$ whenever the input is $x_i$ by performing only $O(rq+q^2\log n)$ operations.
\end{corollary}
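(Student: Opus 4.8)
The plan is to obtain the corollary by directly composing the two main results already established: Theorem~\ref{th:reluach} supplies an unconditional ReLU network whose active-weight count per input is controlled, and Theorem~\ref{th:synth} converts any such unconditional network into a conditional one whose per-input operation count is governed precisely by that active-weight count. Since the active-weight bound in Theorem~\ref{th:reluach} already matches the target order $O(rq + q^2\log n)$, the corollary should follow with only elementary bookkeeping.

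Concretely, first I would invoke Theorem~\ref{th:reluach} on the augmented patterns $\bar{x}_1,\ldots,\bar{x}_n \in \mathbb{R}^{q}$ with desired outputs $d_1,\ldots,d_n$. This yields a feedforward ReLU network $f$ with $f(\bar{x}_i) = d_i$ for every $i$ and, by the active-weight bound \eqref{eq:th1actweightbound}, an active-weight count $\omega_i \triangleq W(\bar{x}_i;f) \leq 12q(q+1)\lceil \log_2 n\rceil + (r+2)q + r - 2 \in O(rq + q^2\log n)$. Crucially, this $f$ exactly reproduces the desired input-output relationships, which is precisely the hypothesis required to apply the synthesis theorem.

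Next I would feed $f$ into Theorem~\ref{th:synth}. Because the network of Theorem~\ref{th:reluach} takes the augmented vectors $\bar{x}_i$ as inputs, the relevant input dimension in Theorem~\ref{th:synth} is $q$ rather than $p$; the theorem therefore produces a conditional network satisfying the same relationships $\bar{x}_i \mapsto d_i$ and performing at most $q + 4\omega_i$ operations on input $\bar{x}_i$. Substituting the bound on $\omega_i$ gives $q + 4\omega_i \in O(q) + O(rq + q^2\log n)$. It then remains only to observe that the additive input-dimension term is subsumed: for $q \geq 1$ and $n \geq 2$ we have $q \leq q^2 \leq q^2\lceil \log_2 n\rceil$, so $O(q) \subseteq O(q^2\log n)$, and hence the total operation count is $O(rq + q^2\log n)$, as claimed.

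I do not expect a genuine obstacle here, since the statement is a composition of two theorems whose interface was designed to line up. The only points requiring care are the two I have flagged: first, correctly identifying the input dimension as $q = 1+p$ (arising from the biasing augmentation) when applying Theorem~\ref{th:synth}, so that the operation count reads $q + 4\omega_i$ rather than $p + 4\omega_i$; and second, verifying that this additive term is dominated by the active-weight contribution and thus vanishes into the asymptotic order. Both are routine, so the corollary follows immediately.
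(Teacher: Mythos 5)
Your proposal is correct and is essentially the paper's own proof: the paper likewise obtains the corollary by applying the synthesis procedure of Theorem~\ref{th:synth} to the network constructed in Theorem~\ref{th:reluach} (it also sketches an alternative direct construction via if-else gating of the switches, which you do not need). Your extra bookkeeping—noting that the relevant input dimension is $q=1+p$ so the bound reads $q+4\omega_i$, and that this additive term is absorbed into $O(rq+q^2\log n)$—is accurate and only makes the argument more explicit than the paper's one-line version.
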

\begin{proof}
We apply the synthesis procedure in Theorem \ref{th:synth} to the network constructed in Theorem \ref{th:reluach}. An alternative more ``direct'' proof (that does not need Theorem \ref{th:synth}) is to implement the gating blocks $\mathbf{S}_{ij}$ in Fig. \ref{fig:divideandconquer} via if-else conditioning statements, resulting in a similar architecture.
\end{proof}

We recall that for ReLU networks without conditional computation, the best achievability result \cite{vardi2021optimal} requires $O(\sqrt{n\log n})$ neurons and weights for memorizing a dataset of size $n$. Since the construction in \cite{vardi2021optimal} is not optimized for conditional computation, it is easily observed that every input activates all $O(\sqrt{n\log n})$ neurons and weights of the network, resulting in $O(\sqrt{n \log n})$ active weights per input. In fact, the remarkable construction in \cite{vardi2021optimal} is a narrow network of a finite width of only $12$, but depth $O(\sqrt{n\log n})$. Even if every input activates only one neuron at each layer, one obtains $O(\sqrt{n\log n})$ active neurons or arithmetic operations per input. In contrast, we only need $O(\log n)$ active weights or operations (via Corollary \ref{toyotacorolla}) per input.

The fact that one cannot fundamentally achieve a sub-polynomial number of neurons without conditional computation is an easy consequence of the VC dimension theory. In our setting, the VC dimension corresponds to the cardinality of the largest dataset that can be memorized. Hence, upper bounds on the VC dimension translate to lower bounds on the memorization capacity. In this context, \cite[Theorem 8.6]{anthony1999neural} provides an $O(n_w^2)$ upper bound on the VC dimension for ReLU networks, where $n_w$ denotes the number of weights in the network. Since any upper bound on the VC dimension is also an upper bound on the cardinality of the largest possible dataset that can be memorized, we have $n \in O(n_w^2)$. It follows that $n_w \in \Omega(\sqrt{n})$ weights are the best possible for ReLU networks without conditional computation, meaning that the results of \cite{vardi2021optimal} are optimal up to logarithmic factors. Also, using the bound $n_w \leq n_e^2$, where $n_e$ is the number of neurons in the network (equality holds in the extreme scenario where every neuron is connected to every other neuron, including itself), we obtain the necessity of $\Omega(n^{1/4})$ neurons.

\begin{figure}[h]\centerline{
     \begin{subfigure}[b]{0.45\textwidth}
\centerline{\scalebox{0.85}{\begin{tikzpicture}

\draw[->] (0,0) -- (4,0) node[anchor=north] {$\bar{u}_2$};
\draw[->] (0,0) -- (0,4) node[anchor=east] {$\bar{u}_3$};
\draw (1,1.5) node {$\bullet\,1$}; 
\draw (0.6,2.5) node {$\bullet\,7$}; 
\draw (1.8,0.75) node {$\bullet\,5$}; 
\draw (1.1,3.75) node {$\bullet\,2$}; 
\draw (3.4,2.9) node {$\bullet\,3$}; 
\draw (3,1.5) node {$\bullet\,4$}; 
\draw (3,3.5) node {$\bullet\,6$}; 
\draw[line width=1pt, red, <->] (-0.5,1.8) -- (4.5,2.8) node[anchor=north] {$w_{11}^T\bar{u} = 0$};
\end{tikzpicture}}}
\vspace{-5pt}
\caption{Dividing a set of point to two equal subsets.}
\label{fig:divideandconquera}
\end{subfigure}
     \begin{subfigure}[b]{0.45\textwidth}
\centerline{\scalebox{0.85}{\begin{tikzpicture}

\draw[->] (0,0) -- (4,0) node[anchor=north] {$\bar{u}_2$};
\draw[->] (0,0) -- (0,4) node[anchor=east] {$\bar{u}_3$};
\draw (1,1.5) node {$\bullet\,1$}; 
\draw (0.6,2.5) node {$\bullet\,7$}; 
\draw (1.8,0.75) node {$\bullet\,5$}; 
\draw (1.1,3.75) node {$\bullet\,2$}; 
\draw (3.4,2.9) node {$\bullet\,3$}; 
\draw (3,1.5) node {$\bullet\,4$}; 
\draw (3,3.5) node {$\bullet\,6$}; 
\draw[line width=1pt, red, <->] (-0.5,1.8) -- (4.5,2.8) node[anchor=north] {$w_{11}^T\bar{u} = 0$};
\draw[line width=1pt, blue, <->] (1.8,2.4) -- (2.5,4.1) node[anchor=west] {$w_{21}^T\bar{u} = 0$};
\draw[line width=1pt, brown, <->] (2.2,2.5) -- (4.5,4) node[anchor=west] {$w_{32}^T\bar{u} = 0$};
\draw[line width=1pt, black, <->] (1.5,2) -- (4,0.5) node[anchor=west] {$w_{22}^T\bar{u} = 0$};
\draw[line width=1pt, magenta, <->] (1.7,1.7) -- (0.3,0.3) node[anchor=west] {$w_{33}^T\bar{u} = 0$};
\draw[line width=1pt, green, <->] (-0.5,2.8) -- (1.8,3) node[anchor=south east] {$w_{31}^T\bar{u} = 0$};
\end{tikzpicture}}}
\vspace{-5pt}
\caption{Continued divisions until reaching singletons.}
\label{fig:divideandconquerb}
\end{subfigure}}
\vspace{-5pt}
\caption{The divide and conquer strategy.}
\label{fig:divideandconquer}
\end{figure}
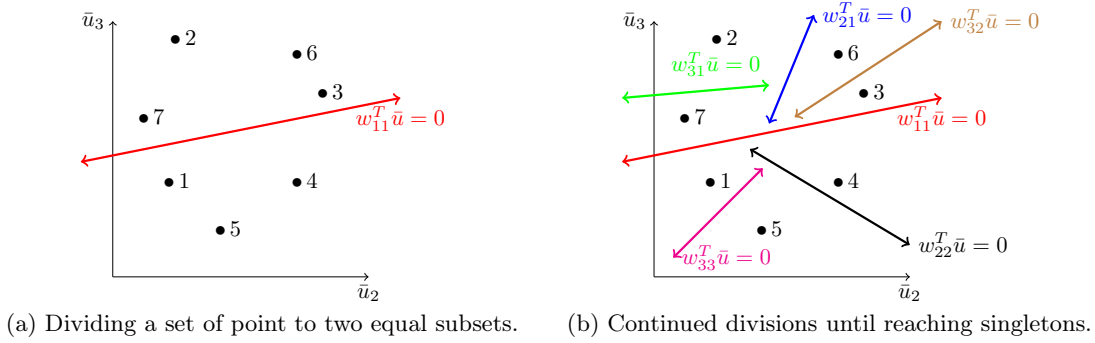

\begin{figure}[h]
	\centerline{\scalebox{0.85}{\begin{tikzpicture}[
				arrowstyle/.style={decoration={markings,mark=at position 1 with
						{\arrow[scale=1.7,>=stealth]{>}}},postaction={decorate}},
				roundnode/.style={circle, draw=blue!60, fill=blue!5, very thick, minimum size=2mm},
				squarednode/.style={rectangle, draw=green!60, fill=yellow!5, very thick, minimum size=5mm},
				squarednode2/.style={rectangle, draw=magenta!60, fill=magenta!5, very thick, minimum size=5mm},
				squarednode3/.style={rectangle, draw=blue!60, fill=blue!5, very thick, minimum size=5mm},
				squarednode4/.style={rectangle, draw=red!60, fill=red!5, very thick, minimum size=5mm},squarednode5/.style={rectangle, draw=orange!90, fill=orange!15, very thick, minimum size=5mm},]
				
				\node[inner sep=0pt, minimum size=3mm](n1x) at (-0.75,-0.5){$=[\begin{smallmatrix} 1 \\ x_6 \end{smallmatrix}]$};
				
				\node[inner sep=2pt, minimum size=3mm](n1) at (-1,0) {$u=\bar{x}_6$};
				\node[squarednode](t) at (0.5,0) {$\mathbf{T}$};
				\draw [arrowstyle](n1)--(t);
				\node[squarednode2](s11) at (2.5,0) {$\mathbf{S}_{11}$};
				\draw [arrowstyle](t)--node [below] {$\bar{u}=\bar{\bar{x}}_6$}(s11);
				\node[squarednode2](s21) at (4,1.8) {$\mathbf{S}_{21}$};
				\node[squarednode2](s22) at (4,-1.8) {$\mathbf{S}_{22}$};
				\draw [arrowstyle](s11)--node [below] {$\,\,\bar{\bar{x}}_6$}(s21);
				\draw [arrowstyle](s11)--node [below] {$0$}(s22);
				\node[squarednode2](s32) at (5.5,0.9) {$\mathbf{S}_{32}$};
				\node[squarednode2](s31) at (5.5,2.7) {$\mathbf{S}_{31}$};
				\node[squarednode2](s33) at (5.5,-2.7) {$\mathbf{S}_{33}$};
				
				\draw [arrowstyle](s21)--node [below] {$0$}(s31);
				\draw [arrowstyle](s21)--node [below] {$\!\!\bar{\bar{x}}_6$}(s32);
				\draw [arrowstyle](s22)--node [below] {$0$}(s33);
				
				\node[squarednode3](g2) at (7,3.15) {$\gamma_2$};
				\node[squarednode3](g7) at (7,2.25) {$\gamma_7$};
				\node[squarednode3](g6) at (7,1.35) {$\gamma_6$};
				\node[squarednode3](g3) at (7,0.45) {$\gamma_3$};
				\node[squarednode3](g5) at (7,-3.15) {$\gamma_5$};
				\node[squarednode3](g1) at (7,-2.25) {$\gamma_1$};
				\node[squarednode3](g4) at (7,-0.9) {$\gamma_4$};
				
				\draw [arrowstyle](s31)--node [above] {$0$}(g2);
				\draw [arrowstyle](s31)--node [below] {$0$}(g7);
				\draw [arrowstyle](s32)--node [above] {$\bar{\bar{x}}_6$}(g6);
				\draw [arrowstyle](s32)--node [below] {$0$}(g3);
				\draw [arrowstyle](s22)--node [above] {$0$}(g4);
				\draw [arrowstyle](s33)--node [above] {$0$}(g1);
				\draw [arrowstyle](s33)--node [below] {$0$}(g5);
				
				\node[squarednode5](finalsum) at (9.5,0) {$\sum$};
				\node[inner sep=2pt, minimum size=4mm](output) at (10.5,0) {$d_6$};
				\draw [arrowstyle](finalsum)--(output); 
				
				\draw [arrowstyle](g2)--node [above] {$0$}(finalsum);
				\draw [arrowstyle](g7)--node [above] {$0$}(finalsum);
				\draw [arrowstyle](g6)--node [above] {$d_6$}(finalsum);
				\draw [arrowstyle](g3)--node [above] {$0$}(finalsum);
				\draw [arrowstyle](g4)--node [above] {$0$}(finalsum);
				\draw [arrowstyle](g1)--node [above] {$0$}(finalsum);
				\draw [arrowstyle](g5)--node [above] {$0$}(finalsum);
				
%
				
	\end{tikzpicture}}}
	\caption{An example network architecture for the achievability result. The block $\mathbf{T}$ represents the transformation in Step 1. Blocks $\mathbf{S}_{ij}$ are the routing switches. Blocks $\gamma_i$ represent ReLU neurons with weights $\gamma_i$, and the $\sum$ block represents a ReLU neuron with all-one weights.}
	\label{fig:nnforachievability}
\end{figure}
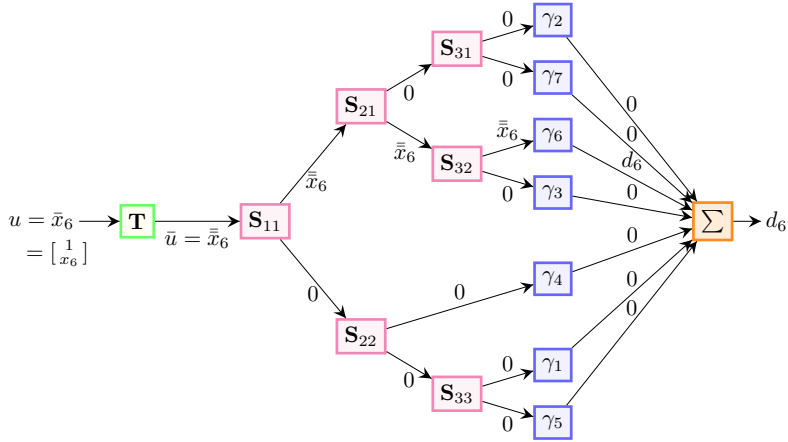

In contrast to the deep and narrow architecture that is optimal for unconditional computation, the network that achieves the performance in Theorem \ref{th:reluach} is considerably wider but much shallower. In fact, the proof in Appendix \ref{sec:reluachthproofer} reveals that our network has width $O(n)$, with $O(n)$ nodes, and depth $O(\log n)$. These numbers are a consequence of the classical binary decision tree that we utilize in our network construction. Thanks to conditional computation, although the network has $O(n)$ nodes, every dataset pattern activates only $O(\log n)$ neurons instead of $O(\sqrt{n})$ neurons without conditional computation. Note that the function $\log n$ grows much slower than $\sqrt{n}$ so that the number of neurons activated by the conditional computation is asymptotically negligible relative to an unconditional network. Moreover, every neuron in the conditional computation network has a bounded number of $q$ weights. The two facts translate to  big savings in terms of the cost of computation, thanks to Theorem \ref{th:synth}. An interesting open problem that we shall leave as future work is whether one can achieve the same performance with a much smaller network size, e.g. with $O(\sqrt{n})$ neurons, which is known to be optimal. This will also help reduce the size of the network synthesized by Theorem \ref{th:synth}.

It should be mentioned that the bounds (\ref{eq:th1actneuronbound}) and (\ref{eq:th1actweightbound}) on the number of active neurons and weights as stated in Theorem \ref{th:reluach} holds only for input patterns that belong to the dataset. For such patterns, only one path from the input to the output of the neural network is activated. The global behavior of the neural network for arbitrary inputs is more complex. A careful analysis of the construction in Appendix \ref{sec:reluachthproofer}  reveals that near the decision boundaries (the lines in Fig. \ref{fig:divideandconquerb}), multiple paths of the network are activated, This will result in more active neurons and weights than what is suggested by the upper bounds in (\ref{eq:th1actneuronbound}) and (\ref{eq:th1actweightbound}), respectively. However, the measure of such pathological inputs can be made arbitrarily small by tuning the switches appropriately.

\label{sec:reluachthdiscuss}

\section{Ultimate Computational Limits to Memory Recall}
\label{sec:reluconv}
In the previous section, we showed that $O(\log n)$ operations is sufficient to perfectly recall one of $n$ input-output relationships. We now analyze the necessary number of operations per input for successful recall. Our main result in this context is provided by the following theorem.

\begin{theorem} 
\label{th:reluconv}
Let the input vectors $x_1,\ldots,x_n\in\mathbb{R}^p$ and the corresponding desired output vectors $d_1,\ldots,d_n\in\mathbb{R}$ satisfy the following  property:
\begin{itemize}
\item The matrix $\Bigl[\begin{matrix}x_{i_1} & \cdots & x_{i_{p+1}} \\ d_{i_1} & \cdots & d_{i_{p+1}} \end{matrix} \Bigr]$ has rank $p+1$ for any subset $\{i_1,\ldots,i_{p+1}\}$ of $\{1,\ldots,n\}$. 
\end{itemize}
Suppose that a conditional network $f$ satisfies the desired input-output relationships: For every $i$, the output of the network is $d_i$ whenever the input is $x_i$. Also, assume that the number of operations performed on each $x_i$ is bounded above by some $\alpha \geq 0$. Then, we have $\alpha \geq \log_2 \frac{n}{p} \in O(\log n)$.
\end{theorem}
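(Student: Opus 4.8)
The plan is to combine a region-counting argument with the rank hypothesis. The operation budget $\alpha$ limits how finely the conditional network can partition the input space into pieces on which it behaves linearly, and the rank condition caps how many data points each linear piece can interpolate.

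First I would refine the computation of $f$ on an input into a single binary decision tree that records, in the order they are executed, both the outcome of each conditioning edge traversed and the sign of the argument of each ReLU operation performed (with the convention $\phi(0)=0$ assigned to the inactive branch; this is consistent because the two conditions at a vertex are complementary, so every input has a well-defined path, including on boundaries). Each conditioning costs exactly one operation and each ReLU operation costs at least one operation, so along the computation path of any $x_i$ the number of these binary decisions is at most the total number of operations, hence at most $\alpha$. A binary tree in which every root-to-leaf path has length at most $\alpha$ has at most $2^\alpha$ leaves (for instance by Kraft's inequality, since each leaf contributes $2^{-\mathrm{depth}}\ge 2^{-\alpha}$ to a sum that is at most $1$). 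Thus the points $x_1,\ldots,x_n$ fall into at most $2^\alpha$ leaves.

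Next I would show that on the set of inputs reaching a fixed leaf the map $f$ is \emph{homogeneous} linear. Once all conditioning outcomes and all ReLU signs are fixed, every operation $\phi(\beta_n \mathtt{v}_n + \cdots + \beta_1 \mathtt{v}_1)$ collapses to either $\beta_n \mathtt{v}_n + \cdots + \beta_1 \mathtt{v}_1$ or $0$, both linear in the $\mathtt{v}_j$. Since the only primitive variables are the input components and the model introduces no constant/bias variable, an induction over the operations along the path shows that the scalar output equals $f(x)=a^\top x$ for some $a\in\mathbb{R}^p$ depending only on the leaf. I would then invoke the rank hypothesis: if $p+1$ of the data points, say $x_{i_1},\ldots,x_{i_{p+1}}$, reached this leaf, then $a^\top x_{i_k}=d_{i_k}$ for each $k$, i.e.\ the nonzero vector $[a^\top,-1]^\top$ is orthogonal to every $[x_{i_k}^\top, d_{i_k}]^\top$; but those $p+1$ augmented vectors span $\mathbb{R}^{p+1}$ by the rank-$(p+1)$ assumption, forcing $[a^\top,-1]^\top=0$, a contradiction. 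Hence each leaf accounts for at most $p$ data points.

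Combining the two bounds gives $n \le p\cdot 2^\alpha$, and therefore $\alpha \ge \log_2(n/p)$, as claimed. The main obstacle I anticipate is the bookkeeping in the first step: one must charge the ReLU sign decisions and the conditioning decisions against the \emph{same} budget $\alpha$, so that the number of linear regions is $2^\alpha$ rather than $2^{2\alpha}$ (the latter would yield only $\alpha \ge \tfrac12\log_2(n/p)$), and one must confirm that the model genuinely admits no additive constant, since a true bias would weaken the per-leaf bound from $p$ to $p+1$ and break the exact match with the rank-$(p+1)$ hypothesis.
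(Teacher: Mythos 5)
Your proposal is correct and follows essentially the same route as the paper's proof: bound the number of leaves of the computation tree by $2^{\alpha}$ (charging both conditionings and ReLU comparisons to the same budget), observe that the network restricted to a leaf is a homogeneous linear map, and use the rank-$(p+1)$ hypothesis to cap each leaf at $p$ data points. The only cosmetic difference is that you derive the per-leaf contradiction by a direct orthogonality/spanning argument where the paper invokes the Rouch\'{e}--Capelli theorem; these are equivalent.
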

\begin{proof}
	Since there are at most $\alpha$ operations per input, there are at most $\alpha$ comparisons per input as well, counting the comparisons needed to implement the neuron activation functions.  We can represent the entire conditional computation graph/network as a binary tree where the distance between the root and a leaf node is at most $\alpha$. This results in tree of at most $2^{\alpha}$ leaf nodes. Each node of the tree compares real numbers to intermediate variables, which are linear functions of network inputs or other intermediate variables. Assume now the contrary to the statement of the theorem, i.e. the number of operations for every input is at most $\alpha$, all input-output relationships are satisfied, but $n > 2^\alpha p$. Since there are at most $2^\alpha$ leaf nodes, there is at least one leaf node that admits $1+p$ inputs (i.e. there are $1+p$ input patterns of the dataset such that the corresponding path over the  tree ends at the leaf node). Without loss of generality, suppose the indices for these inputs are $\{1,\ldots,1+p\}$. Writing down the input output relationship of the network for the leaf node, we obtain
	\begin{align}
		\label{rouchecond}
		[d_1 \cdots d_{1+p}] = W_0[x_1 \cdots x_{1+p}]
	\end{align}
	for some $q\times p$ matrix $W_0$. This relationship follows, as by fixing 
	a path on the tree, we obtain the unique linear transformation $W_0$ that maps the inputs to the neural network to its outputs. According to the Rouch\'{e}–Capelli theorem \cite[Theorem 2.38]{shafarevich2012linear}, a necessary condition for the existence of $W_0$ to solve (\ref{rouchecond}) is 
	\begin{align}
		\mathrm{rank}([x_1 \cdots x_{p+1}]) = \mathrm{rank}\Bigl(\Bigl[\begin{matrix}x_1 & \cdots & x_{p+1} \\ d_{1} & \cdots & d_{p+1} \end{matrix} \Bigr]\Bigr).
	\end{align}
	On the other hand, as a result of the rank condition stated in the theorem, the left hand side rank evaluates to $p$, while the right hand side evaluates to $1+p$. We arrive at a contradiction, which concludes the proof of the theorem.
\end{proof}

%
%
%


The  condition on the dataset and the desired outputs that appear in the statement of Theorem \ref{th:reluconv} is, up to a certain extent, necessary for the lower bound to hold. For example, if the desired outputs can simply be obtained as a linear transformation of inputs, then one only needs to perform a constant number of operations for each input to obtain the desired outputs, and the lower bound becomes invalid. In this context, the rank condition ensures that subsets of outputs cannot be obtained as linear functions of inputs. However, it should also be mentioned that the condition is not particularly restrictive in limiting the class of datasets where the theorem holds. For example, if the components of the dataset members and the corresponding desired outputs are sampled in an independent and identically distributed manner over any continuous distribution with positive support, it can be shown that the rank condition will hold with probability $1$. Hence, it can be argued that almost all datasets satisfy the rank condition and thus obey the converse result.  

Corollary \ref{toyotacorolla} has shown that $n$ patterns can be memorized using  $O(\log n)$ operations per input. The matching $\Omega(\log n)$ lower bound in Theorem \ref{th:reluconv} proves that the $\Theta(\log n)$ rate is the best possible. However, the two results do not resolve how the number of operations should scale with respect to the input and output dimensions.\footnote{Although Theorem \ref{th:reluconv} only considers scalar desired outputs, it can easily be extended to the multi-dimensional case. In fact, a successful memorization of, say, two-dimensional output vectors with $o(\log n)$ active neurons would imply the successful memorization of scalar outputs with the same number of neurons (simply by ignoring the neurons that provide the second component of the output), contradicting Theorem \ref{th:reluconv}.} This aspect of the problem is left as future work.

\section{Conclusions and Discussions}
\label{sec:conclusions}

We have studied the fundamental limits to the memorization capacity of neural networks with conditional computation. First, we have described a general procedure to synthesize a conditional network out of an ordinary unconditional feedforward network. According to the procedure, the number of operations required to perform inference on an input in the synthesized conditional network becomes proportional to the number of so-called ``active weights'' of the unconditional network given the same input. This reduces the problem of designing good conditional networks to the problem of designing ordinary feedforward networks with a low number of active weights or nodes. Using this idea, we have shown that for ReLU networks, $\Theta(\log n)$ operations per input is necessary and sufficient for memorizing a dataset of $n$ patterns. An unconditional network requires $\Omega(\sqrt{n})$ operations to achieve the same performance. We also described a method to synthesize a conditional network out of an unconditional network in a computationally-efficient manner.

One direction for future work is to study the memorization capacity for a sum-constraint,\ as opposed to a per-input constraint on the number of operations.\ While a per-input constraint makes sense for delay-sensitive applications, the sum-constrained scenario is also relevant for early-exit architectures, where there is a lot of variation on the size of active components of the network. Extensions of our results to different activation functions or to networks with bounded bit complexity can also be considered. In this context,  \cite{vardi2021optimal} shows that, for every $\epsilon\in[0,\frac{1}{2}]$, $\Theta(n^{\epsilon})$ weights with $\Theta(n^{1-\epsilon})$ bit complexity is optimal for memorizing $n$ patterns, up to logarithmic factors.  This result was proven under a mild separability condition, which restricts distinct dataset patterns to be $\delta$-separated in terms of Euclidean distance. The optimality of the results of \cite{vardi2021optimal} suggests that under a similar separability condition, the bit complexity of our designs can also potentially be reduced without loss of optimality. This will remain as another interesting direction for future work.

Many neural network architectures rely heavily on batch computation because matrix multiplications can be performed very efficiently on modern processors. In this context, one disadvantage of conditional architectures is their general misalignment with the idea of batching. Nevertheless, if there are not too many branches on the network, and if the branch loads are balanced, each subbranch can still receive a relatively large batch size. Fortunately, only a few gates can promise significant performance gains \cite{fedus2022switch}. More work is needed, however, to make networks with aggressive conditioning more efficient in the batch setting. In this context, our synthesis theorem can potentially enable conditional networks to be trained as if they are unconditional, enabling batch computation. We note that in techniques like soft-gating \cite{shazeer2017outrageously}, a batch also traverses the entire computation graph during training \cite{kaya2019shallow}.

The main goal of this paper has been to derive theoretical bounds on the memorization capacity. More research is clearly needed for practical methods to train neural networks that can effectively utilize conditional computation and also generalize well. We hope that the constructions and theoretical bounds provided in this paper will motivate further research in the area.

\newsavebox{\shortpagebox}

\subsubsection*{Acknowledgments}
This work was supported in part by National Science Foundation (NSF) under Grant CNS-2148182 and in part by Army Research Lab (ARL) under Grant W911NF-2120272.

\bibliographystyle{IEEEtran}
\bibliography{references}

%
%

\appendix


%
%
%
%
%

\section{Proof of Theorem \ref{th:synth}}
\label{sec:syntthproof}
We describe the proof of Theorem \ref{th:synth} in two major steps as follows. We will repeat some of the steps in the sketch of the proof for completeness.

{\bf Step-1: } By an `` input configuration,'' we mean a certain subset of neurons that are active at a certain layer of the unconditional ``base'' network. We represent input configurations by binary vectors where ``$0$'' represents an inactive neuron, while ``$1$'' represents an active neuron. As an example, the input configuration $[1\,0\,1]^T$ refers to a scenario where only the first and the third neurons at the layer is active. We follow the convention that all components of the input vectors are ``active'' nodes.

Analogous to input configurations, ``output configurations'' are binary vectors that represent whether neurons provide zero or non-zero outputs. For example, an output configuration of $[1\,1\,0]^T$ means that only the first and the second neurons provide a non-zero output. At Layer $0$, or the input layer, we follow the convention that the output configuration represents whether the components of the input are zero or non-zero. 

For our purposes, the significance of input and output configurations is the following observation: Suppose that, at Layer $\ell$ of an unconditional network, a particular input $y_0$ provides the input and output configurations $\mathtt{IC}_{\ell}$ and  $\mathtt{OC}_{\ell}$, respectively. In particular, we have $\mathtt{OC}_0 = \mathbf{1}(y_0 \neq 0)$. According to the neural network input-output relationship in (\ref{eq:systemmodel}), and the definition of input and output configurations, we have the identity 
\begin{align}
	\label{icellidentity}
	\mathtt{IC}_{\ell} = \mathbf{1}\left(\mathbf{1}(W_{\ell} \neq 0)[\mathtt{OC}_{\ell-1}^T \cdots \mathtt{OC}_{0}^T]^T \neq 0\right),
\end{align}
where $\mathbf{1}(\cdot)$ represents the indicator function, applied component-wise. In other words, knowing the output configurations of preceding Layers $0,\ldots,\ell-1$, we can uniquely determine the input configuration of Layer $\ell$. To see why (\ref{icellidentity}) holds, note that some Neuron $j$ at Layer $\ell$ is active (and thus $\mathtt{IC}_{\ell,j}=1$) if it admits at least one non-zero input from any one of the previous layers, multiplied by a corresponding non-zero weight in $W_{\ell}$. The indicator functions encode this criterion.

Exploiting the above observation that output configurations imply unique input configurations, the first step of the proof is thus to construct what we call a ``configuration tree.'' The configuration tree is a directed rooted tree, where nodes at Depth $\ell$ correspond to all possible sequences of output configurations from neural network Layers $0,\ldots,\ell-1$, which imply a unique input configuration at Layer $\ell$. We only need to consider the sequences induced by the training set. We represent vertices of the tree by the sequence of output configurations and the unique input configuration that is induced by them (The inclusion of the induced input configuration to the vertex information is thus superfluous and mainly to make the exposition clearer.). The only exception is the root of the tree, which is represented by an all-$1$ vector of dimension $p$. This corresponds to the unique input configuration of Layer $0$ (all components of the input are considered ``active.'').  Edges between vertices represent an output configuration that maps the sequence of output configurations to an input configuration.


We construct the configuration tree by showing the inputs to the neural network one by one.  We begin with the first element $x_1$ of the dataset. The corresponding input configuration $\mathtt{IC}_0 = [1 \cdots 1]^T$ at Layer $0$ is already available on the tree as the root node. As the next step, we determine the output configuration $\mathtt{OC}_0 = \mathbf{1}(x_1 \neq 0)$ at Layer $0$. Knowing the output configuration $\mathtt{OC}_0$, we can uniquely determine the input configuration $\mathtt{IC}_1 = \mathbf{1}(\mathbf{1}(W_1 \neq 0) \mathtt{OC}_0 \neq 0)$ at Layer $1$. We add $(\mathtt{OC}_0,\mathtt{IC}_1)$ as a new vertex to Depth $1$ of the tree. We create an edge with label $\mathtt{OC}_0$ to connect the root node $\mathtt{IC}_0$ to  $(\mathtt{OC}_0,\mathtt{IC}_1)$. We continue by finding the output configuration at Layer $1$. Knowing the output configurations $\mathtt{OC}_1$ and $\mathtt{OC}_0$, we can uniquely determine the input configuration $\mathtt{IC}_2$ at Layer $2$, which we add as the vertex  $(\mathtt{OC}_1,\mathtt{OC}_0,\mathtt{IC}_2)$ to Depth $2$. We create an edge with label $\mathtt{OC}_1$ to connect $(\mathtt{OC}_0,\mathtt{IC}_1)$ to  $(\mathtt{OC}_1,\mathtt{OC}_0,\mathtt{IC}_2)$. 
We continue in the same manner for all subsequent layers, and then for all samples in the dataset. 

The formal construction of the configuration tree is provided in Algorithm \ref{alg:configtreeconstruction}.  Edges are defined as triplets, where the first two components are the endpoints, and the last component is the edge label. 


\begin{algorithm}[H]
	\caption{An algorithm to construct the configuration tree}\label{alg:configtreeconstruction}
	\begin{algorithmic}[1]
		\State $\mathcal{V} \leftarrow \{[1\cdots1]^T\}$. \Comment{\parbox[t]{.5\linewidth}{Initialize the tree with a root node but no edges.}}
		\State $\mathcal{E} \leftarrow \emptyset$.
		\For{$i=1$ to $n$}  
		\State $y_0 \leftarrow x_i$ \Comment{\parbox[t]{.5\linewidth}{A dataset input to the neural network.}}
		\State $\overline{\mathtt{OC}}_{-1} = \emptyset$  \Comment{\parbox[t]{.5\linewidth}{A convention specific to this algorithm.}}
		\State $\mathtt{IC}_0 \leftarrow [1\cdots 1]^T$. \Comment{\parbox[t]{.5\linewidth}{All input components are active by convention.}}
		\For{$\ell=1$ to $L$} 
		\State $\overline{y}_{\ell-1} \leftarrow [y_{\ell-1}^T  \cdots  y_0^T]^T$. \Comment{\parbox[t]{.5\linewidth}{Outputs of all Layers $< \ell$ are inputs to Layer $\ell$.} }
		\State $\overline{\mathtt{OC}}_{\ell-1} \leftarrow \mathbf{1}(\overline{y}_{\ell-1}  \neq 0)$. \Comment{\parbox[t]{.5\linewidth}{Output configurations of all inputs to Layer $\ell$.} }
		\State $\mathtt{IC}_{\ell} \leftarrow \mathbf{1}(\mathbf{1}(W_{\ell} \neq 0)\overline{\mathtt{OC}}_{\ell-1}\neq 0)$.  \Comment{\parbox[t]{.5\linewidth}{Same formula as (\ref{icellidentity}).}}
		\State $\mathcal{V} \leftarrow \mathcal{V} \cup \{(\overline{\mathtt{OC}}_{\ell-1}, \mathtt{IC}_{\ell})\}$.
		\State $\mathcal{E} \leftarrow \mathcal{E} \cup \{(\overline{\mathtt{OC}}_{\ell-2}, \mathtt{IC}_{\ell-1}),(\overline{\mathtt{OC}}_{\ell-1}, \mathtt{IC}_{\ell}),\mathtt{OC}_{\ell-1})\}$.
		
		\State $y_{\ell} \leftarrow \phi(W_{\ell}\overline{y})$.
		\State $\mathtt{OC}_{\ell} \leftarrow \mathbf{1}(y_{\ell} \neq 0)$. 
		\EndFor
		\EndFor
	\end{algorithmic}
\end{algorithm}

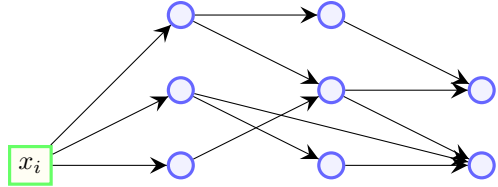
\begin{figure}[h]
	\centerline{\scalebox{1.0}{\begin{tikzpicture}[
				arrowstyle/.style={decoration={markings,mark=at position 1 with
						{\arrow[scale=2,>=stealth]{>}}},postaction={decorate}},
				roundnode/.style={circle, draw=blue!60, fill=blue!5, very thick, minimum size=2mm},
				squarednode/.style={rectangle, draw=green!60, fill=yellow!5, very thick, minimum size=5mm},]
				\node[squarednode] (L01) at (0,0) {$x_i$};
				\node[roundnode] (L11) at (2,0){};
				\node[roundnode] (L12) at (2,1){};
				\node[roundnode] (L13) at (2,2){};
				\node[roundnode] (L21) at (4,0){};
				\node[roundnode] (L22) at (4,1){};
				\node[roundnode] (L23) at (4,2){};
				\node[roundnode] (L31) at (6,0){};
				\node[roundnode] (L32) at (6,1){};

				\draw [arrowstyle](L01)--(L11);
				\draw [arrowstyle](L01)--(L12);
				\draw [arrowstyle](L01)--(L13);
				\draw [arrowstyle](L13)--(L22);
				\draw [arrowstyle](L13)--(L23);
				\draw [arrowstyle](L12)--(L21);
				\draw [arrowstyle](L11)--(L22);
				
				\draw [arrowstyle](L12)--(L31);

				\draw [arrowstyle](L23)--(L32);
				\draw [arrowstyle](L22)--(L32);  
				\draw [arrowstyle](L22)--(L31);
				\draw [arrowstyle](L21)--(L31);  
	\end{tikzpicture}}}
	\caption{An example network for demonstrating the construction of the configuration tree.}
	\label{fig:neuralnetworkforconfigtreeexample}
\end{figure}

\begin{figure}\centerline{\scalebox{0.9}{
			\begin{tikzpicture}
				[
				grow                    = right,
				sibling distance        = 2.3em,
				level distance          = 10em,
				edge from parent/.style = {draw, -latex},
				every node/.style       = {font=\footnotesize},
				sloped
				]
				\node [env] {1} 
				child { node [env] {1,111} 
					child { node [env] {101,1,110} 
						child { node [env] {100,101,1,10} 
							edge from parent node [above] {100} }
						edge from parent node [above] {101} }
					child { node [env2] {001,1,010} 
						child { node [env2] {010,001,1,11} 
							edge from parent node [above] {010} }
						edge from parent node [above] {001} }
					edge from parent node [above] {1} 
				};
	\end{tikzpicture}}}
	\caption{One configuration tree corresponding to the network in Fig. \ref{fig:neuralnetworkforconfigtreeexample}.}
	\label{fig:configtreeexample}\vspace{-5pt}
\end{figure}
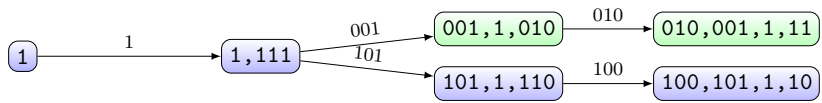

\begin{example}
	As an example of the construction of the configuration tree, consider the network in Fig. \ref{fig:neuralnetworkforconfigtreeexample} with one dimensional inputs and two dimensional outputs. We consider a dataset with two samples $n=2$. Suppose that the first input $x_1 \neq 0$ provides the input configurations $[1\,1\,1]$, $[1\,1\,0]$, $[1\,0]$, and output configurations $[1\,0\,1]$, $[1\,0\,0]$, $[1\,0]$, at Layers $1$, $2$, $3$, respectively. Then, after processing $x_1$, the tree will consist of all blue/white nodes in Fig. \ref{fig:configtreeexample} and the edges between them. Note that the last components of all vertices represent the input configurations, while edges represent the output configurations. Hence, the node $\mathtt{101,1,110}$ at Depth $2$ of the tree represents the scenario where only the first and the second neurons in the second layer are active (have at least one non-zero input). Since there are $2$ active neurons, there are $4$ possible output configurations: $\mathtt{000},\mathtt{010},\mathtt{011}$, and $\mathtt{110}$. In particular, the output configuration $\mathtt{010}$ implies an input configuration of $\mathtt{11}$ at Depth $3$ (Layer $3$). This is because, Node $2$ of Layer $2$ is connected to both neurons at Layer $3$, and a non-zero output of this neuron implies a non-zero input to Layer $3$ neurons. Now, suppose the second input $x_2 \neq 0$ provides the input configurations $[1\,1\,1]$, $[0\,1\,0]$, $[1\,1]$, and output configurations $[0\,0\,1]$, $[0\,1\,0]$, $[1\,1]$, at Layers $1$, $2$, $3$, respectively. Then, the construction of the tree is completed as shown in Fig. \ref{fig:configtreeexample}.\hfill\ensuremath{\qedsymbol}
\end{example}

{\bf Step-2:} We now construct the conditional network itself out of the configuration tree. In fact, the conditional network will follow the same structure as the configuration tree, traversing from the root of the tree to one of the leaves as we calculate the output for a given input to the neural network.  As in the case of the configuration tree, we calculate the layer outputs sequentially one at a time. First, we find the output of Layer $1$. Given any input $y_{0}$ to the network, the input configuration $\mathtt{IC}_0$ at Layer $0$ is the all-one vector of dimension $p$. The conditional network first compares against all \begin{wrapfigure}{r}{0.25\textwidth}
	\vspace{-10pt}
	\begin{minipage}{0.25\textwidth}
		\begin{algorithm}[H]
			\caption{Two nested binary conditions}\label{alg:nestedconditionings}
			\begin{algorithmic}[1]
				\If{$u_1 = 0 $} 
				\If{$u_2 = 0 $} 
				\State  ...
				\Else
				\State  ...
				\EndIf 
				\Else
				\If{$u_2 = 0$} 
				\State  ...
				\Else
				\State  ...
				\EndIf 
				\EndIf 
			\end{algorithmic}
		\end{algorithm}
	\end{minipage}\vspace{-10pt}
\end{wrapfigure}possible output configurations that are found at the configuration tree at Layer $0$, which correspond to edges connected to the root node. Since the input dimension is $p$, this can be done through $p$ nested binary conditions. Moreover, $p$ operations are sufficient to reach the ``leaves'' of $p$ nested conditions. An example of this step is provided in Algorithm \ref{alg:nestedconditionings} for a two-dimensional input $[u_1 u_2]^T$. In general, at each of the $2^p$ leaves of nested conditions, we uniquely know the output configuration $\mathtt{OC}_0$ at Layer $0$, as well as the input configuration $\mathtt{IC}_1$ at Layer $1$ via the configuration tree. At Layer $1$, we only need to calculate the outputs of the neurons indicated by the input configuration $\mathtt{IC}_1$, since other neurons are guaranteed to have all-zero inputs and thus provide zero output.
This yields the output $y_1$ of the entire Layer $1$. At this stage, we have obtained the input $y_1$ to the second layer (possibly in addition to $y_0$, the skip connections from inputs, that we already know), and  the input configuration $\mathtt{IC}_1$ at Layer $1$. We proceed recursively in the same manner described above: That is to say, $|\mathtt{IC}_1|$ nested binary conditions are utilized to determine the output configuration $\mathtt{OC}_1$. Together with $\mathtt{OC}_0$, this yields the input configuration $\mathtt{IC}_2$ at Layer $2$ as well as the Layer $2$ outputs $y_2$. Outputs for all subsequent layers are determined in a similar fashion.

\begin{algorithm}[H]
	\caption{Construction of the conditional network out of the configuration tree. In the algorithm,  $x$ is the input to the neural network. Outputs are provided in the variable $y_L$.}\label{alg:condoutofconf}
	\begin{algorithmic}[1]
		\State Replace the root node with \colorbox{black!10}{$y_0 \leftarrow x$}.
		\State Replace any edge of the form $((\overline{\mathtt{OC}}_{\ell-2}, \mathtt{IC}_{\ell-1}),(\overline{\mathtt{OC}}_{\ell-1}, \mathtt{IC}_{\ell}),\mathtt{OC}_{\ell-1})$ with the conditioning statement \begin{align}\label{nestcondsalgo} \colorbox{black!10}{If $\mathbf{1}(y_{\ell-1}\llbracket \mathtt{IC}_{\ell-1} \rrbracket \neq 0) = \mathtt{OC}_{\ell-1}\llbracket \mathtt{IC}_{\ell-1} \rrbracket$, then}\end{align}
		\State Replace any vertex of the form  $(\overline{\mathtt{OC}}_{\ell-1}, \mathtt{IC}_{\ell})$ with  
		\begin{align}\label{nestcondsalgox}\colorbox{black!10}{$y_{\ell}\llbracket \mathtt{IC}_{\ell} \rrbracket = \phi(W_{\ell} \llbracket \mathtt{IC}_{\ell} , \overline{\mathtt{OC}}_{\ell-1} \rrbracket  \overline{y}_{\ell-1} \llbracket \overline{\mathtt{OC}}_{\ell-1}\rrbracket )$},\end{align} where $\overline{y}_{\ell-1} \triangleq [y_{\ell-1}^T  \cdots  y_0^T]^T$.
	\end{algorithmic}
\end{algorithm}

The construction of the conditional network is formally stated in Algorithm \ref{alg:condoutofconf}. In the description of the algorithm, for any vector $a = [a_1 \cdots a_n]$, and binary vector $b = [b_1 \cdots b_n] \in\{0,1\}^n$, we use the notation that $a \llbracket b \rrbracket$ to represent the $\|b\|_1$-dimensional vector consisting only of components $a_i$ such that $b_i = 1$. Here, $\|\cdot\|_1$ is the $1$-norm, counting the number of non-zero components for the case of a binary vector. Likewise, for a matrix $W$, the notation $W\llbracket b_1,b_2 \rrbracket$ represents the $ \|b_1\|_1 \times \|b_2\|_1$ matrix where we only consider those rows and columns of $W$ as indicated by the non-zero components of $b_1$ and $b_2$, respectively. For example, $[6,9,7]\llbracket 1,0,1 \rrbracket = [6,7]$, and $[\begin{smallmatrix} 3 & 6 & 8 \\ 1 & 4 & 7 \end{smallmatrix}]\llbracket [0,1],[0,1,1] \rrbracket = [4,7]$. Note that the conditioning statements in Line 2 are actually implemented via nested binary conditions as described in Algorithm \ref{alg:nestedconditionings}. We have presented the nested conditionings in their compact form shown in Line 2 for a clearer exposition.


Let us now show that the conditional network as constructed in Algorithm  \ref{alg:condoutofconf} satisfies the statement of the theorem. In particular, it provides the same output as the unconditional network over the dataset. This can be shown by induction. Suppose the network manages to correctly calculate all outputs and input/output configurations up to and including Layer $\ell-1$ for a certain input with the exception of $\mathtt{OC}_{\ell-1}$, which is yet to be determined. At the conditional network graph, we are then ``at the end of'' vertex $(\overline{\mathtt{OC}}_{\ell-2}, \mathtt{IC}_{\ell-1})$, having just executed (\ref{nestcondsalgo}) for $\ell\leftarrow\ell-1$. A combination of (\ref{nestcondsalgo}) and (\ref{nestcondsalgox}) will now determine $\mathtt{OC}_{\ell-1}$, $\mathtt{IC}_{\ell}$, and $y_{\ell}$. Suppose that the true values for these variables are $\mathbf{O}$, $\mathbf{I}$, and $\mathbf{y}$, respectively. We will thus show that $\mathtt{OC}_{\ell-1} = \mathbf{O}$, $\mathtt{IC}_{\ell} = \mathbf{I}$ and $y_{\ell} = \mathbf{y}$. We first note that in (\ref{nestcondsalgo}), we will have an edge with $\mathtt{OC}_{\ell-1} = \mathbf{O}$. This follows as the conditional network is constructed out of the configuration tree, which itself is constructed from the dataset. Since $\mathbf{O}$ exists on the configuration tree, so it should on the conditional network tree. The conditional network then transitions to (\ref{nestcondsalgox})  with $\mathtt{OC}_{\ell-1} = \mathbf{O}$ and $\mathtt{IC}_{\ell} = \mathbf{I}$. Here, the input configuration is correctly calculated as it is unique given previous output configurations. It follows that (\ref{nestcondsalgox}) is calculated correctly so that $y_{\ell}= \mathbf{y}$.

We now evaluate the computational complexity. Note that,  Line 2 of Algorithm \ref{alg:condoutofconf} can be implemented via $|\mathtt{IC}_{\ell-1}|_1$ nested binary conditions, requiring $|\mathtt{IC}_{\ell-1}|_1$ operations. We now evaluate the cost of Line 3, where only the outputs of active nodes of Layer $\ell$ are calculated. Let $a_{\ell}$ denote the number of active nodes at Layer $\ell$. Line 3 performs at most $a_{\ell}$ multiplications and $a_{\ell}$ additions to calculate the local fields of these active neurons. Calculating the active neuron outputs through ReLU activation functions require a further $|\mathtt{IC}_{\ell}|_1$ comparisons or operations. Hence, Line $3$ requires $2a_{\ell} + |\mathtt{IC}_{\ell}|_1$ operations at most. Traversing from the root to a leaf of the computation graph, the total number of operations is at most $p+2\sum_{\ell=1}^L (a_{\ell} + |\mathtt{IC}_{\ell}|_1)$. Since, at each layer, there should be as many active neurons as there are active weights, we have $|\mathtt{IC}_{\ell}|_1 \leq a_{\ell}$. Substituting this estimate to the previous bound, we obtain the same upper bound in the statement of the theorem. This concludes the proof of Theorem \ref{th:synth}.

One aspect of the conditional network, as constructed in Algorithm \ref{alg:condoutofconf} is that certain inputs that do not belong to the dataset may end up in conditions for which no output is defined. This occurs when one of the bins or vertices induced by the nested binary conditions in Line 2 (and as exemplified in Algorithm \ref{alg:nestedconditionings}) remains empty.  Since the main focus of this work is memorization of a dataset, we are not concerned with the network operation for such inputs. Nevertheless, the network operation can easily be generalized so that the output is well-defined for any input, e.g. simply by removing the binary condition with one or two empty bins, and reconnecting any children of the removed condition to the condition's parent.

\section{Proof of Theorem \ref{th:reluach}}
\label{sec:reluachthproofer}
We explicitly construct the neural network that achieves the performance as provided in the theorem statement. Our construction relies on several steps as described in the following:


{\bf Step-1:} Let $u = [u_1\, u_2 \cdots u_q]^T$ be the input to the neural network, where $u_1 = 1$. Let $\bar{u} = [\bar{u}_1\, \bar{u}_2 \cdots \bar{u}_q]^T$ represent the output of the first layer. First, we translate the dataset vectors such that every component of the translated vectors is positive. We also provide a skip connection for the constant input $1$ at the first component. For the former purpose, given $j\in\{1\ldots,p\}$, let $x_{ij}$ denote the $j$th component of dataset pattern $x_i$. We define the constant
\begin{align}
M = 1+\max_{i\in\{1,\ldots,n\}}\max_{j\in\{1,\ldots,p\}} |x_{ij}|.
\end{align}
The input-output relationships of the first layer is then expressed as
\begin{align}
\bar{u}_1 & = u_1 = 1. \\
\bar{u}_j & = \relu\biggl( \Bigr[\begin{matrix} u_1 \\  u_j \end{matrix} \Bigl]^T\Bigr[\begin{matrix} M \\ 1 \end{matrix} \Bigl] \biggr),\,j=2,\ldots,q.
\end{align}
In particular, if $u = \bar{x}_i$ for some $i\in\{1,\ldots,n\}$, then
\begin{align}
\bar{u}_j = \relu(u_j + M) = u_j + M,\,j=2,\ldots,q.
\end{align}
The last equality follows as $u_j + M = x_{ij} + M$ is positive by the definition of $M$. Hence, for the dataset members, the output after the first layer of neurons is  an additive translation by the vector $[0\,M\cdots M]^T$. There are $p=q-1$ active neurons in the first layer.

{\bf Step-2:} For a clearer exposition, we first describe the step through an example. We follow a divide and conquer strategy resembling a binary search, which is illustrated in Fig. \ref{fig:divideandconquerduplicate}. Suppose that the inputs are two-dimensional $p=2$ and the outputs are scalars $q=1$. In Fig. \ref{fig:divideandconqueraduplicate}, we show the second and the third components $[\bar{u}_2\,\bar{u}_3]^T$ of $7$ dataset patterns after the first layer, together with the indices of the patterns. We also show a line $w_{11}^T\bar{u} = 0$ that separates the set of points to two subsets such that one side of the line contains $\lceil \frac{7}{2} \rceil = 4$ points, and the other side of the line contains the remaining $3$ points. We will formally show later that, given $m$ points, a separation where one side of the hyperplane contains exactly $\lceil \frac{m}{2} \rceil$ of the points is always possible in general.

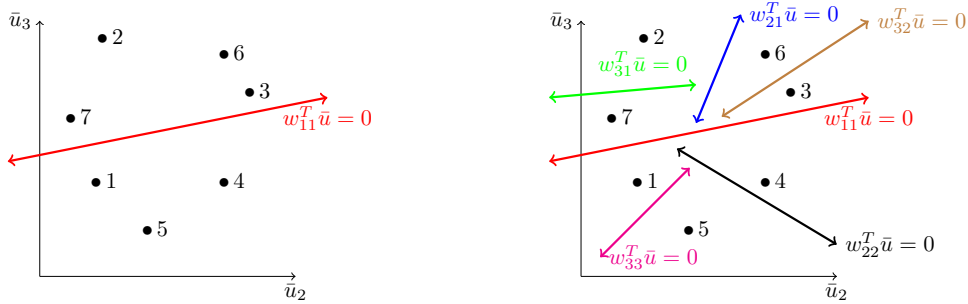
\begin{figure}[h]\centerline{
		\begin{subfigure}[b]{0.45\textwidth}
			\centerline{\scalebox{0.85}{\begin{tikzpicture}
						
						\draw[->] (0,0) -- (4,0) node[anchor=north] {$\bar{u}_2$};
						\draw[->] (0,0) -- (0,4) node[anchor=east] {$\bar{u}_3$};
						\draw (1,1.5) node {$\bullet\,1$}; 
						\draw (0.6,2.5) node {$\bullet\,7$}; 
						\draw (1.8,0.75) node {$\bullet\,5$}; 
						\draw (1.1,3.75) node {$\bullet\,2$}; 
						\draw (3.4,2.9) node {$\bullet\,3$}; 
						\draw (3,1.5) node {$\bullet\,4$}; 
						\draw (3,3.5) node {$\bullet\,6$}; 
						\draw[line width=1pt, red, <->] (-0.5,1.8) -- (4.5,2.8) node[anchor=north] {$w_{11}^T\bar{u} = 0$};
			\end{tikzpicture}}}
			\caption{Dividing a set of point to two equal subsets.}
			\label{fig:divideandconqueraduplicate}
		\end{subfigure}
		\begin{subfigure}[b]{0.45\textwidth}
			\centerline{\scalebox{0.85}{\begin{tikzpicture}
						
						\draw[->] (0,0) -- (4,0) node[anchor=north] {$\bar{u}_2$};
						\draw[->] (0,0) -- (0,4) node[anchor=east] {$\bar{u}_3$};
						\draw (1,1.5) node {$\bullet\,1$}; 
						\draw (0.6,2.5) node {$\bullet\,7$}; 
						\draw (1.8,0.75) node {$\bullet\,5$}; 
						\draw (1.1,3.75) node {$\bullet\,2$}; 
						\draw (3.4,2.9) node {$\bullet\,3$}; 
						\draw (3,1.5) node {$\bullet\,4$}; 
						\draw (3,3.5) node {$\bullet\,6$}; 
						\draw[line width=1pt, red, <->] (-0.5,1.8) -- (4.5,2.8) node[anchor=north] {$w_{11}^T\bar{u} = 0$};
						\draw[line width=1pt, blue, <->] (1.8,2.4) -- (2.5,4.1) node[anchor=west] {$w_{21}^T\bar{u} = 0$};
						\draw[line width=1pt, brown, <->] (2.2,2.5) -- (4.5,4) node[anchor=west] {$w_{32}^T\bar{u} = 0$};
						\draw[line width=1pt, black, <->] (1.5,2) -- (4,0.5) node[anchor=west] {$w_{22}^T\bar{u} = 0$};
						\draw[line width=1pt, magenta, <->] (1.7,1.7) -- (0.3,0.3) node[anchor=west] {$w_{33}^T\bar{u} = 0$};
						\draw[line width=1pt, green, <->] (-0.5,2.8) -- (1.8,3) node[anchor=south east] {$w_{31}^T\bar{u} = 0$};
			\end{tikzpicture}}}
			\caption{Continued divisions until reaching singletons.}
			\label{fig:divideandconquerbduplicate}
	\end{subfigure}}
	\caption{The divide and conquer strategy. This figure is a duplicate of Fig. \ref{fig:divideandconquer} in order to make the proof easier to follow.}
	\label{fig:divideandconquerduplicate}
\end{figure}

Once an even division or separation is achieved, the next step is to design what we call a ``switching network''. Switching networks are small two-layer neural networks that can be parametrized by any number of points greater than $1$. Let $0_q$ denote the $q$-dimensional all-zero column vector. Roughly speaking, for the scenario in Fig. \ref{fig:divideandconqueraduplicate}, the corresponding switching network maps the input $\bar{u}\in\mathbb{R}^3$ to $[\begin{smallmatrix} \bar{u} \\ 0_3 \end{smallmatrix}]\in\mathbb{R}^6$ if $\bar{u}$ remains ``above'' the line $w_{11}^T \bar{u} = 0$, and to $[\begin{smallmatrix} 0_3 \\ \bar{u} \end{smallmatrix}]$ if  $\bar{u}$ remains ``below'' the line.\footnote{More precisely, there will be a small margin around the separating hyperplane where the aforementioned input-output relationships may fail. It turns out that this technical complication poses no issues for the patterns that we wish to memorize, as the margin can be made arbitrarily small.} We can now feed the first $3$ components of the output of the switch to one subnetwork, and the last $3$ components to another subnetwork. The two subnetworks are disconnected except that they share different components of the same output as inputs. The first subnetwork follows the same divide and conquer strategy with a switch, but only for the four points that remain over the line $w_{11}^T \bar{u} = 0$. 
The second subnetwork similarly processes the three points that remain under the line $w_{11}^T \bar{u} = 0$. The goal of the all-zero outputs is to deactivate one subnetwork when it is no longer relevant to process a given input. Subnetworks have ``subsubnetworks'' and so on until one arrives at a singleton dataset sample at each partition, as shown in Fig. \ref{fig:divideandconquerbduplicate}. 

Before proceeding to the next step, we formalize the constructions in Step 2 via the following lemma. 
\begin{lemma}
\label{lemma:switch}
For $m \geq 2$, let $a_1,\ldots,a_m\in\mathbb{R}^{q}$ be distinct input patters whose first components equal $1$.

\begin{enumerate}[label={[\roman*]}]
\item There exists $w\in\mathbb{R}^{q}$, such that 
\begin{align}
\label{eq:wcond1} |\{i:w^Ta_i < 0\}| & = \lceil \tfrac{m}{2} \rceil, \\
\label{eq:wcond2} |\{i:w^Ta_i > 0\}| & = m-\lceil \tfrac{m}{2} \rceil.
\end{align}
\item Suppose further that the components of $a_i$s are all non-negative. Let $w\in\mathbb{R}^{q}$  satisfy (\ref{eq:wcond1}) and (\ref{eq:wcond2}). Let $0_q$ represent the $q$-dimensional all-zero vector. There is a two-layer network $\mathbf{S}:\mathbb{R}^q \rightarrow \mathbb{R}^{2q}$ 
that  satisfies the input-output relationships
\begin{align}
\label{eq:saicond}
\mathbf{S}(a_i) = \left\{\begin{array}{rl} \biggl[\begin{matrix} a_i \\ 0_q  \end{matrix}\biggr],  & w^Ta_i < 0,  \\ \vphantom{\scalebox{2.1}{$\int$}} \biggl[\begin{matrix} 0_q \\ a_i \end{matrix}\biggr],  & w^Ta_i > 0. \end{array} \right.,\,i=1,\ldots,m.
\end{align}
The network has $2q+2$ neurons. Two of the $2q+2$ neurons have $q$ weights, and the remaining $2q$ neurons have $2$ weights.
\end{enumerate}
\end{lemma}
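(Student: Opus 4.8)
The plan is to treat the two parts separately: part [i] is a genericity argument producing a balanced separating hyperplane, and part [ii] is an explicit ReLU gating construction that exploits the non-negativity of the components together with the strict margin guaranteed by [i]. For part [i], note that since the $a_i$ are distinct and all share first component $1$, each difference $a_i - a_j$ ($i\neq j$) vanishes in the first coordinate but is nonzero in coordinates $2,\ldots,q$. I would first choose a direction $v=[0\ v_2\cdots v_q]^T$ that separates the projections: within the subspace $\{v:v_1=0\}\cong\mathbb{R}^{q-1}$, the set on which $v^T a_i = v^T a_j$ for some pair is a finite union of proper subspaces (one per pair, proper because $a_i-a_j\neq 0$ in the last $q-1$ coordinates), hence has measure zero, so almost every such $v$ yields $m$ distinct values $v^T a_1,\ldots,v^T a_m$. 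Sorting these, I pick a threshold $t$ strictly between the $\lceil m/2\rceil$-th and $(\lceil m/2\rceil+1)$-th smallest value and set $w = v - t e_1$, where $e_1$ is the first standard basis vector. Because $a_{i,1}=1$, we have $w^T a_i = v^T a_i - t$, so $w^T a_i<0$ holds for exactly the $\lceil m/2\rceil$ smallest projections and $w^T a_i>0$ for the remaining $\lfloor m/2\rfloor$, with no ties; this is precisely (\ref{eq:wcond1}) and (\ref{eq:wcond2}).

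For part [ii], the two facts I would lean on are that $w^T a_i\neq 0$ for every $i$ (by the strict inequalities just established), so the margin $\delta := \min_i |w^T a_i|$ is strictly positive, and that $\phi$ acts as the identity on the non-negative components of each $a_i$. I would build the first layer from two neurons $g^+ := \phi(w^T a)$ and $g^- := \phi(-w^T a)$, each using the $q$ weights of $\pm w$. The second layer consists of $2q$ neurons: for $j=1,\ldots,q$ the first block computes $\phi(a_j - C g^+)$ and the second block computes $\phi(a_j - C g^-)$, each neuron reading the single input coordinate $a_j$ through a skip connection (permitted by the model (\ref{eq:systemmodel})) together with one gating output, hence $2$ weights apiece. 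Choosing $C > (\max_{i,j} a_{i,j})/\delta$, I verify the cases: when $w^T a_i<0$ we have $g^+=0$, so the first block returns $\phi(a_{i,j}) = a_{i,j}$, while $g^- = -w^T a_i \geq \delta$ forces $a_{i,j} - C g^- < 0$ and the second block returns $0$; the case $w^T a_i>0$ is symmetric. This reproduces (\ref{eq:saicond}) exactly, and the tally is $2$ neurons with $q$ weights plus $2q$ neurons with $2$ weights, as claimed.

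The measure-zero genericity argument in [i] and the four-case check in [ii] are routine. The step that requires care is the gating in [ii]: a \emph{single} constant $C$ must deactivate the irrelevant block for \emph{all} $m$ patterns at once, which works only because the dataset is finite and therefore admits a strictly positive margin $\delta$, and because non-negativity of the components lets the active block copy $a_i$ through the ReLUs unchanged. This is exactly the subtlety flagged in the footnote of Step~2: for inputs lying within distance $O(1/C)$ of the hyperplane the clean switching can fail, but every dataset pattern sits outside this margin, so the stated input-output relationships hold on $\{a_1,\ldots,a_m\}$.
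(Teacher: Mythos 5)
Your proof is correct and follows essentially the same route as the paper: part [i] via a generic direction on the last $q-1$ coordinates followed by a bias placed between the $\lceil m/2\rceil$-th and $(\lceil m/2\rceil+1)$-th sorted projections (your $w=v-te_1$ is the paper's $w=[\bar{\bar w};\bar w]$), and part [ii] via the two gating neurons $\phi(\pm w^Ta)$ feeding $2q$ two-weight neurons $\phi(a_j - Cg^{\pm})$ with $C$ chosen from the positive margin $\delta=\min_i|w^Ta_i|$, exactly the paper's $C_1/C_2$. No gaps.
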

\begin{proof}
Let us first prove [i]. Let $\bar{a}_{i} = [a_{i,2},\ldots,a_{i,q}]^T$ denote the $(q-1)$-dimensional vector consisting of all but the first component of $a_i$ (which equals $1$). First, we show the existence of $\bar{w}\in\mathbb{R}^{q-1}$ such that $\bar{w}^T \bar{a}_i \neq \bar{w}^T \bar{a}_j,\,\forall i \neq j$, or equivalently 
\begin{align}
\label{desiredwproperty} \bar{w}^T(\bar{a}_i - \bar{a}_j) \neq 0,\,\forall i \neq j. 
\end{align}
Since the input patterns $a_i$s are distinct, so are $\bar{a}_i$s, and thus $\bar{a}_i - \bar{a}_j$ are non-zero vectors. It follows that any unit norm $\bar{w}$ sampled uniformly at random on $\{x\in\mathbb{R}^q:\|x\|=1\}$ satisfies (\ref{desiredwproperty}) with probability $1$. Let us now order the resulting $(\bar{w}^T\bar{a}_i)$s in ascending order as
\begin{align}
\bar{w}^T\bar{a}_{i_1} < \cdots < \bar{w}^T\bar{a}_{i_{\lceil \frac{m}{2} \rceil}} < \bar{w}^T\bar{a}_{i_{\lceil \frac{m}{2} \rceil+1}} < \cdots < \bar{w}^T\bar{a}_{i_{m}},
\end{align}
for some permutation $i_1,\ldots,i_m$ of $1,\ldots,m$. We can now tune the bias as
\begin{align}
\bar{\bar{w}} \triangleq  - \frac{1}{2}\left(\bar{w}^T\bar{a}_{i_{\lceil \frac{m}{2} \rceil}} + \bar{w}^T\bar{a}_{i_{\lceil \frac{m}{2} \rceil+1}}\right).
\end{align}
The effect of the bias is the ordering (note the zero in the middle)
\begin{align}
\bar{\bar{w}} +\bar{w}^T\bar{a}_{i_1} < \cdots < \bar{\bar{w}} +\bar{w}^T\bar{a}_{i_{\lceil \frac{m}{2} \rceil}} < 0 < \bar{\bar{w}} +\bar{w}^T\bar{a}_{i_{\lceil \frac{m}{2} \rceil+1}} < \cdots < \bar{\bar{w}} +\bar{w}^T\bar{a}_{i_{m}}.
\end{align}
Therefore, the choice $w = [\begin{smallmatrix} \bar{\bar{w}} \\ {\bar{w}} \end{smallmatrix}]$ satisfies  conditions (\ref{eq:wcond1}) and (\ref{eq:wcond2}). This concludes the proof of [i]. 

We now prove [ii]. Let 
\begin{align}
C_1 & \triangleq 1+\max_{i\in\{1,\ldots,m\}}\max_{j\in\{1,\ldots,q\}} |a_{ij}|, \\
C_2 & \triangleq \min_{i\in\{1,\ldots,m\}} |w^Ta_{i}|.
\end{align}
Let $v =[v_1 \cdots v_{q}]^T\in \mathbb{R}^q$ represent an input to the neural network $\mathbf{S}$ that we shall construct. Also, let $v^+ = [v_1^+ \cdots v_{q}^+]^T$ and $v^-=[v_1^- \cdots v_{q}^-]^T \in \mathbb{R}^{q}$ denote the first and the last $q$ components of the $2q$-dimensional output of $\mathbf{S}$. We thus have $\mathbf{S}(v) = [\begin{smallmatrix} v^+ \\ v^- \end{smallmatrix}]$, and set
\begin{align}
v_j^+ & = \relu\Biggl( \biggr[\begin{matrix} v_j \\ \vphantom{\scalebox{1.3}{$\int$}} y^+ \end{matrix} \biggl]^T\biggr[\begin{matrix} 1 \\ \vphantom{\scalebox{1.3}{$\int$}} -\!\frac{C_1}{C_2} \end{matrix} \biggl] \Biggr),\,v_j^- = \relu\Biggl( \biggr[\begin{matrix} v_j  \\ \vphantom{\scalebox{1.3}{$\int$}} y^- \end{matrix} \biggl]^T\biggr[\begin{matrix} 1 \\ \vphantom{\scalebox{1.3}{$\int$}} -\!\frac{C_1}{C_2} \end{matrix} \biggl] \Biggr),\,j\in\{1,\ldots,q\},
\end{align}
where $y^+ = \relu(w^T v)$ and $y^- = \relu(-w^T v)$. Let us now verify (\ref{eq:saicond}). Consider some index $i$ with $w^T a_i < 0$, and suppose $v = a_i$. For any $j\in\{1,\ldots,q\}$, we have 
\begin{align}
v_j^+ & = \phi(v_j - \tfrac{C_1}{C_2}y^+)  = \phi(v_j) = \phi(a_{ij}) = a_{ij}.
\end{align}
The last equality holds as the components of $a_i$ are assumed to be all non-negative. Also, 
\begin{align}
v_j^-  = \phi(v_j - \tfrac{C_1}{C_2}y^-) 
= \phi(a_{ij} - \tfrac{C_1}{C_2}|w^Ta_i|)  \leq \phi(a_{ij} - C_1)  = 0.
\end{align}
Inequality follows as $\frac{|w^Ta_i|}{C_2} \geq 1$ and $\relu$ is monotonic. Since $v_j^- \geq 0$ obviously holds, we have $v_j^- = 0$. This proves the case $w^Ta_i < 0$ in (\ref{eq:saicond}). The remaining case  $w^Ta_i > 0$ can be verified in a similar manner. This concludes the proof of [ii], and thus that of Lemma \ref{lemma:switch} as well.
\end{proof}

\begin{figure}[h]
	\centerline{\scalebox{1}{\begin{tikzpicture}[
	arrowstyle/.style={decoration={markings,mark=at position 1 with
		{\arrow[scale=1.7,>=stealth]{>}}},postaction={decorate}},
roundnode/.style={circle, draw=blue!60, fill=blue!5, very thick, minimum size=2mm},
squarednode/.style={rectangle, draw=green!60, fill=yellow!5, very thick, minimum size=5mm},
squarednode2/.style={rectangle, draw=magenta!60, fill=magenta!5, very thick, minimum size=5mm},
squarednode3/.style={rectangle, draw=blue!60, fill=blue!5, very thick, minimum size=5mm},
squarednode4/.style={rectangle, draw=red!60, fill=red!5, very thick, minimum size=5mm},squarednode5/.style={rectangle, draw=orange!90, fill=orange!15, very thick, minimum size=5mm},]

\node[inner sep=0pt, minimum size=3mm](n1x) at (-0.75,-0.5){$=[\begin{smallmatrix} 1 \\ x_6 \end{smallmatrix}]$};

\node[inner sep=2pt, minimum size=3mm](n1) at (-1,0) {$u=\bar{x}_6$};
\node[squarednode](t) at (0.5,0) {$\mathbf{T}$};
\draw [arrowstyle](n1)--(t);
\node[squarednode2](s11) at (2.5,0) {$\mathbf{S}_{11}$};
\draw [arrowstyle](t)--node [below] {$\bar{u}=\bar{\bar{x}}_6$}(s11);
\node[squarednode2](s21) at (4,1.8) {$\mathbf{S}_{21}$};
\node[squarednode2](s22) at (4,-1.8) {$\mathbf{S}_{22}$};
\draw [arrowstyle](s11)--node [below] {$\,\,\bar{\bar{x}}_6$}(s21);
\draw [arrowstyle](s11)--node [below] {$0$}(s22);
\node[squarednode2](s32) at (5.5,0.9) {$\mathbf{S}_{32}$};
\node[squarednode2](s31) at (5.5,2.7) {$\mathbf{S}_{31}$};
\node[squarednode2](s33) at (5.5,-2.7) {$\mathbf{S}_{33}$};

\draw [arrowstyle](s21)--node [below] {$0$}(s31);
\draw [arrowstyle](s21)--node [below] {$\!\!\bar{\bar{x}}_6$}(s32);
\draw [arrowstyle](s22)--node [below] {$0$}(s33);

\node[squarednode3](g2) at (7,3.15) {$\gamma_2$};
\node[squarednode3](g7) at (7,2.25) {$\gamma_7$};
\node[squarednode3](g6) at (7,1.35) {$\gamma_6$};
\node[squarednode3](g3) at (7,0.45) {$\gamma_3$};
\node[squarednode3](g5) at (7,-3.15) {$\gamma_5$};
\node[squarednode3](g1) at (7,-2.25) {$\gamma_1$};
\node[squarednode3](g4) at (7,-0.9) {$\gamma_4$};

\draw [arrowstyle](s31)--node [above] {$0$}(g2);
\draw [arrowstyle](s31)--node [below] {$0$}(g7);
\draw [arrowstyle](s32)--node [above] {$\bar{\bar{x}}_6$}(g6);
\draw [arrowstyle](s32)--node [below] {$0$}(g3);
\draw [arrowstyle](s22)--node [above] {$0$}(g4);
\draw [arrowstyle](s33)--node [above] {$0$}(g1);
\draw [arrowstyle](s33)--node [below] {$0$}(g5);

\node[squarednode5](finalsum) at (9.5,0) {$\sum$};
\node[inner sep=2pt, minimum size=4mm](output) at (10.5,0) {$d_6$};
\draw [arrowstyle](finalsum)--(output); 

\draw [arrowstyle](g2)--node [above] {$0$}(finalsum);
\draw [arrowstyle](g7)--node [above] {$0$}(finalsum);
\draw [arrowstyle](g6)--node [above] {$d_6$}(finalsum);
\draw [arrowstyle](g3)--node [above] {$0$}(finalsum);
\draw [arrowstyle](g4)--node [above] {$0$}(finalsum);
\draw [arrowstyle](g1)--node [above] {$0$}(finalsum);
\draw [arrowstyle](g5)--node [above] {$0$}(finalsum);

%

	\end{tikzpicture}}}
	\caption{An example network architecture for the achievability result. The block $\mathbf{T}$ represents the transformation in Step 1. Blocks $\mathbf{S}_{ij}$ are the routing switches. Blocks $\gamma_i$ represent ReLU neurons with weights $\gamma_i$, and the $\sum$ block represents a ReLU neuron with all-one weights. This figure is a duplicate of Fig. \ref{fig:nnforachievability} in order to make the proof easier to follow.}
	\label{fig:nnforachievabilityduplicate}
\end{figure}
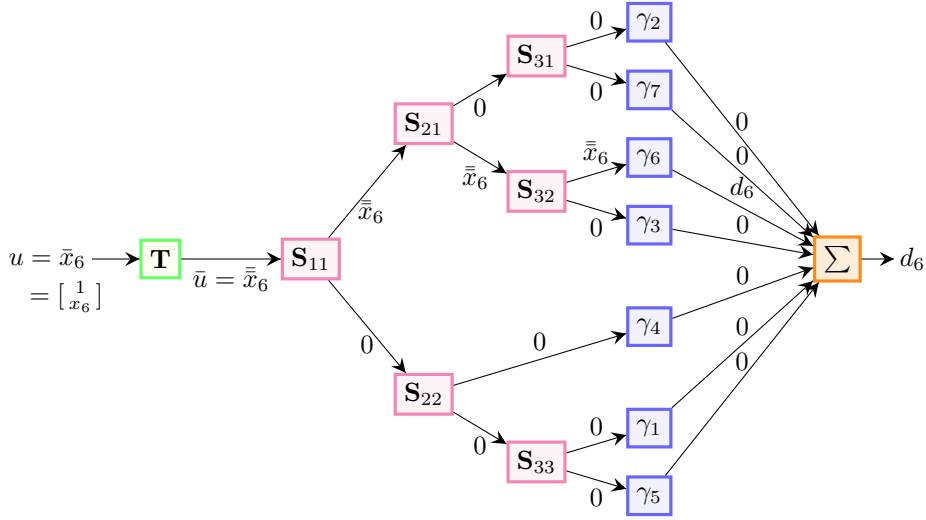

{\bf Step-3: } We can now proceed to describe the full network architecture, as shown in Fig. \ref{fig:nnforachievabilityduplicate} for the example in Step 2. The first layer of the network is an additive translation by the vector $[0\,M\cdots M]^T$, and is explained in Step 1 above. We use the notation $\mathbf{T}$ to denote the translation, which is followed by a sequence of switches as described in Step 2. In particular, $\mathbf{S}_{ij}$ provides the outputs of $\bar{u}$ and $0_3$ to its top and bottom branches, respectively, if its input $\bar{u}$ remains above the line defined by $w_{ij}$ in Fig. \ref{fig:divideandconquerbduplicate}.
By construction, the neurons on a given path of switches is activated for a unique dataset member. For example, the path $\mathbf{S}_{11},\mathbf{S}_{21},\mathbf{S}_{32}$ is activated only for the dataset member $x_6$. The path of switches that correspond to some $x_i$ is followed by a ReLU neuron whose weights satisfy the property that $\relu(\gamma_i^T \bar{\bar{x}}_i) = d_i$, where $\bar{\bar{x}}_i$ is the output of the first layer when the input to the network is $\bar{x}_i$. Since the first component of $\bar{\bar{x}}_i$ equals $1$ for any $i$, one can simply set the first component of $\gamma_i$ to be equal to $d_i$, and the rest of the component of $\gamma_i$ to be equal to zero. The final layers of the network simply adds all the outputs from the $\gamma_i$-neurons.


In the figure, we also show the induced signals on the branches when the input is $u=\bar{x}_6$ as an example. Note that only the block $\mathbf{T}$, switches $\mathbf{S}_{11},\mathbf{S}_{21},\mathbf{S}_{32}$, the neuron with weight $\gamma_6^T$, and the a subset of the summation neurons remains active. Most of the neurons of the network are deactivated through zero signals. We also note that the desired output signal $d_6$ is obtained. 



The construction generalizes to an arbitrary dataset of cardinality $n$ in the same manner. The only difference is that, in order to support  $r$-dimensional desired outputs as stated in the theorem, we need to use $r$ ReLU units in place of each $\gamma_i$ to reproduce the $r$ components of the desired output, as opposed to a single ReLU unit in the example above. Also, the final summation unit will consist of $r$ sub-summation units operating on individual components. Formally, the first layer for the general case is the translation $\mathbf{T}$ as before. Next, $\lceil \log_2 n \rceil$ layers of switches arranged on a binary tree structure act on the translated inputs, forming $n$ leaves as the output, where each leaf has the same dimension as the input. Let $h_1,\ldots,h_n$ denote the feature vectors produced at the leaf nodes after the switches. By Lemma 1, and a rearrangement of indices, we can guarantee that for every $i$, if the network input is $\bar{x}_i$, then $h_i = \bar{\bar{x}}_i$ and $h_j = 0,\,\forall j \neq i$. Here,  $\bar{\bar{x}}_i = \mathbf{T}([1\,\, x_i^T]^T)$, as defined in Fig. \ref{fig:nnforachievabilityduplicate}. Each leaf is then followed by a single-layer network that can map $\bar{\bar{x}}_i$ to its corresponding desired output vector $d_i$. Specifically, there exists $U_i$ such that $d_i = \phi(U_i \bar{\bar{x}}_i)$. Indeed, since $\sum_k u_{i,j,k}\bar{\bar{x}}_{i,k} = d_{i,j}$ has to be satisfied, one can pick $u_{i,j,k}=0,\,k\neq 1$ and $u_{i,j,1} = d_{i,j}/\bar{\bar{x}}_{i,1} = d_{i,j}$, where the last equality holds as $\bar{\bar{x}}_{i,1}=1$ by construction. The overall network output is the accumulation of the outputs of the $U_i$-neurons, and is given by $f(\overline{x}_i) = \phi(\sum_{i=1}^n \phi(U_ih_i))$. Let us show that all input-output relationships are satisfied so that the claim $f(\bar{x}_i) = d_i,\,\forall i$ in the statement of the theorem holds. If the input is $\bar{x}_i$, we have $h_i = \bar{\bar{x}}_i$ and $h_j = 0,\,j\neq i$, by construction. As a result, $f(\overline{x}_i) = \phi(\phi(U_i \bar{\bar{x}}_i)) = \phi(d_i) = d_i$, as desired. The last equality holds as the components of $d_i$ are assumed non-zero in the statement of the theorem (A ReLU network cannot provide a negative output.).

Let us now calculate the number of active neurons and weights when the input belongs to any member of the dataset. The block $\mathbf{T}$ always remains active and consists of $q-1$ neurons with weight $2$. There are at most $\lceil \log_2 n \rceil$ active switches per input. Each switch contains $2q+2$ neurons with $6q$ weights total. There is one active block $\gamma_i$ consisting of $r$ neurons with $q$ weights each. Finally, the sum unit has $r$ active weights. Hence, an upper bound on the total number of active neurons are given by 
\begin{align}
q-1  + (2q+2)\lceil \log_2 n \rceil+r+1 & = 2(q+1)\lceil \log_2 n \rceil + q+r\\
& \in O(r+q\log n),
\end{align}
and an upper bound on the number of active weights can be calculated to be 
\begin{align}
2(q-1) + 6q(2q+2)\lceil \log_2 n \rceil + rq+r & =12q(q+1)\lceil \log_2 n \rceil + (r+2)q+r-2 \\
& \in O(rq+q^2\log n).
\end{align}
 This concludes the proof of the theorem.

\end{document}